\newtheorem{lemma}{Lemma}
\newtheorem{remark}{Remark}
\newtheorem{theorem}{Theorem}
\begin{document}
	
	\title{Off-OAB: Off-Policy Policy Gradient Method with Optimal Action-Dependent Baseline}
	\author{IEEE Publication Technology,~\IEEEmembership{Staff,~IEEE,}
		\thanks{This paper was produced by the IEEE Publication Technology Group. They are in Piscataway, NJ.}
		\thanks{Manuscript received April 19, 2021; revised August 16, 2021.}}
	\author{Wenjia Meng, Qian Zheng, Long Yang, Yilong Yin, and Gang Pan
		\thanks{W. Meng and Y. Yin are with the School of Software, Shandong University, Jinan 250000, China (e-mail: wjmeng@sdu.edu.cn; ylyin@sdu.edu.cn).}
		\thanks{L. Yang is with the School of Artificial Intelligence, Peking University, China (e-mail: yanglong001@pku.edu.cn).}
		\thanks{Q. Zheng and G. Pan are with The State Key Lab of Brain-Machine Intelligence, College of Computer Science and Technology, Zhejiang University, Hangzhou 310000, China  (e-mail: qianzheng@zju.edu.cn; gpan@zju.edu.cn). (Corresponding author: Gang Pan)}
		\thanks{\copyright~20xx IEEE. Personal use of this material is permitted. Permission from IEEE must be obtained for all other uses, in any current or future media, including reprinting/republishing this material for advertising or promotional purposes, creating new collective works, for resale or redistribution to servers or lists, or reuse of any copyrighted component of this work in other works.}
	}
	\markboth{Journal of \LaTeX\ Class Files,~Vol.~14, No.~8, August~2021}%
	{Shell \MakeLowercase{\textit{et al.}}: A Sample Article Using IEEEtran.cls for IEEE Journals}
	
	\maketitle
	
	\begin{abstract}
		Policy-based methods have achieved remarkable success in solving challenging reinforcement learning problems. Among these methods, off-policy policy gradient methods are particularly important due to that they can benefit from off-policy data. However, these methods suffer from the high variance of the off-policy policy gradient (OPPG) estimator, which results in poor sample efficiency during training. In this paper, we propose an off-policy policy gradient method with the optimal action-dependent baseline (Off-OAB) to mitigate this variance issue. 
		Specifically, this baseline maintains the OPPG estimator's unbiasedness while theoretically minimizing its variance.  To enhance practical computational efficiency, we design an approximated version of this optimal baseline. Utilizing this approximation, our method (Off-OAB) aims to decrease the OPPG estimator's variance during policy optimization. 
		We evaluate the proposed Off-OAB method on six representative tasks from OpenAI Gym and MuJoCo, where it demonstrably surpasses state-of-the-art methods on the majority of these tasks.
	\end{abstract}
	
	\begin{IEEEkeywords}
		Deep reinforcement learning,  policy-based method, off-policy reinforcement learning, policy learning.
	\end{IEEEkeywords}

	%
	
	\section{Introduction}
	\label{submission}
	\IEEEPARstart{D}{eep} reinforcement learning has witnessed substantial success in a variety of challenging areas, notably in games \cite{mnih2015human, silver2017mastering, papini2018stochastic, badia2020agent57, schrittwieser2020mastering, nikishin2024deep}, robotics \cite{levine2016end, haarnoja2018soft, hu2023graph}, and control tasks \cite{schulman2015trust, schmitt2020off,yang2022policy, luo2023human, meng2021off, chen2023hierarchical}. These reinforcement learning methods can be roughly divided into two categories: value-based methods and policy-based methods \cite{mnih2015human, lillicrap2015continuous, kallus2020statistically, fujita2018clipped, gruslys2018the, flet2021learning}. Unlike value-based methods \cite{mnih2015human}, policy-based methods \cite{sutton2000policy, saxena2023off, zhou2023robust} directly learn the policy distribution from samples to overcome the curse of dimensionality in action spaces \cite{sutton2018reinforcement, schulman2015trust, fujimoto2023sale}. However, policy-based methods suffer from high variance of policy gradient estimator, which necessitates a large number of samples to obtain accurate gradient estimator \cite{degris2012off, wang2016sample, gu2017interpolated,ciosek2020expected,kuba2021settling, barakat2023reinforcement}. This requirement for extensive interactions, often associated with high costs, leads to poor sample efficiency and inefficiencies in policy learning \cite{wang2016sample, GuLGTL17, gu2017interpolated, ciosek2020expected}.

	To address this issue, several methods are proposed to reduce the high variance of policy gradient estimator in policy-based methods \cite{mnih2016asynchronous,grathwohl2018backpropagation,yang2021sample,yang2022policy,dalal2023softtreemax,lin2023sample}. The majority of these variance reduction methods can be roughly divided into four categories according to their variance reduction ways. 
	Specifically, in the first category, the variance reduction methods use multi-step return and function approximation to estimate action value to reduce the variance of policy gradient estimator (e.g., \cite{mnih2016asynchronous, wang2016sample, espeholt2018impala}). In the second category, the variance reduction methods add deterministic gradient information to construct a new policy gradient estimator to reduce the variance of the policy gradient estimator (e.g.,\cite{GuLGTL17, gu2017interpolated,yang2022policy}). In the third category, the variance reduction methods introduce a state-dependent baseline into the policy gradient estimator to reduce its variance \cite{greensmith2004variance, degris2012off}. In the fourth category, the variance reduction methods introduce an action-dependent baseline into the policy gradient estimator to reduce its variance \cite{liu2018action, grathwohl2018backpropagation, wu2018variance}.

	Variance reduction methods for on-policy policy-based methods, which rely on on-policy data for policy optimization, encompass all four previously mentioned techniques \cite{GuLGTL17, greensmith2004variance, liu2018action, grathwohl2018backpropagation, wu2018variance}. In contrast, variance reduction strategies for off-policy policy-based methods, utilizing off-policy data, include only the first three techniques and do not employ action-dependent baselines \cite{wang2016sample, espeholt2018impala, gu2017interpolated, degris2012off}.
	However, the action-dependent variance reduction method is essential for the off-policy policy gradient (OPPG) estimator. It leverages the action-dependent baseline to accurately predict average policy performance, which can reduce the estimator's variance partly caused by challenges in assigning credit to actions \cite{liu2018action, grathwohl2018backpropagation, wu2018variance}.

	To overcome the above limitation, we propose an off-policy policy gradient method with the optimal action-dependent baseline, which we abbreviate as Off-OAB. This method innovatively incorporates an action-dependent baseline that includes action information into the off-policy policy gradient (OPPG) estimator, effectively reducing its variance. The use of an action-dependent baseline in our method is inspired by its application in the on-policy policy gradient estimator, as described in \cite{wu2018variance}. However, a distinguishing feature of our method is its capability to leverage off-policy data for policy optimization. This pivotal advancement allows for a reduction of on-policy interactions between the agent and the environment, which enhances the efficiency of policy learning by utilizing previously collected data.

	Specifically, we propose an action-dependent baseline that does not introduce bias into the OPPG estimator theoretically. With this unbiased action-dependent baseline, we derive its optimal formulation, which theoretically enables the OPPG estimator to achieve minimal variance. We further highlight the superiority of the optimal action-dependent baseline by theoretically demonstrating the variance reduction compared to the optimal state-dependent baseline. Next, we approximate the optimal action-dependent baseline for the computational benefit in practice. Finally, we propose an off-policy policy gradient method with this action-dependent baseline (Off-OAB) to reduce the variance of the OPPG estimator. Our contributions are as follows: 
	\begin{itemize}
		\item We propose an action-dependent baseline for the off-policy policy gradient (OPPG) estimator and present its optimal formulation. This approach is theoretically designed to minimize the variance of the OPPG estimator. 
		\item Following the theoretical development, we approximate the optimal action-dependent baseline for practical implementation. Building on this approximation, we introduce an off-policy policy gradient method, named Off-OAB, which incorporates this action-dependent baseline to effectively reduce the variance of the OPPG estimator. 
		\item Experiments conducted on continuous control tasks validate that our method outperforms state-of-the-art reinforcement learning methods on most tasks. 
	\end{itemize}

	\section{Preliminaries \& Background}
	
	This section provides the preliminaries for reinforcement learning and off-policy reinforcement learning, including the off-policy policy gradient theorem and off-policy actor-critic.

	\textbf{Reinforcement Learning}.
	Reinforcement learning (RL) \cite{sutton2018reinforcement} is formulated as \emph{Markov decision process} (MDP) expressed as $(S, A, P, r,  d_{0}, \gamma)$, which cosists of the state space $S$, the action space $A$, the transition dynamics distribution $P: S\times A \times S \rightarrow \mathbb{R}$, the reward function $r: S \times A \rightarrow \mathbb{R}$, the initial state distribution $ d_{0}: S \rightarrow \mathbb{R}$, the discount factor $\gamma  \in (0, 1)$. 
	In MDP, the agent interacts with the environment according to the policy $\pi : S \times A \rightarrow [0,1]$ to generate a trajectory, \emph{i.e.}, $\tau = (s_0, a_0, r_0, \cdots, s_T, a_T, r_T)$, where $a_t \sim \pi(\cdot|s_t)$, and $s_{t+1} \sim P(\cdot |s_t, a_t) $. Return $R_t$ is the accumulated discounted reward from timestep $t$, $R_t = \sum^{\top}_{k=t} \gamma^{k-t} r(s_k, a_k)$. 
	$V_{\pi}(s_t) $ and $Q_{\pi}(s_t, a_t)$ are  defined as:
	\begin{flalign}
		V_{\pi}(s_t) &= \mathbb{E}_{a_t, s_{t+1}, \cdots}[R_t], \\
		Q_{\pi}(s_t, a_t) &= \mathbb{E}_{s_{t+1}, a_{t+1}, \cdots}[R_t].
	\end{flalign} 

	\textbf{Off-Policy Learning}. This paper considers the off-policy reinforcement learning setting, where data is generated by \emph{behavior policy} $\mu$ to optimize the \emph{target policy} $\pi$. We aim to maximize the objective function,
	\begin{flalign}
		J(\pi_{\theta} )  =\sum_{s \in S}  d_{\mu}(s) \sum_{a\in A} \pi_{\theta} (a|s) Q_{\pi_{\theta}}(s,a),  \label{off-policy objective function}
	\end{flalign}
	where $\pi_{\theta}$ denotes the target policy parameterized by $\theta$, $d_{\mu} = \lim_{t \rightarrow \infty} P(s_t=s|s_0, \mu)$ denotes the limiting state distribution under behavior policy $\mu$. $P(s_t=s|s_0, \mu)$ denotes the likelihood of reaching state $s_t =s$ from an initial state $s_0$ under the behavior policy $\mu$.
	
	\textbf{Off-Policy Policy Gradient Theorem}.
	To maximize the objective $J(\pi_{\theta})$ in Eq. (\ref{off-policy objective function}), off-policy policy gradient theorem \citep{degris2012off} updates the policy parameter $\theta$ by,
	\begin{flalign}
		\theta_{k+1}  = \theta_{k} + \alpha_{k} \nabla_{\theta} J(\pi),
	\end{flalign}
	where $k$ is the step index, $\alpha_{k}$ is the step size. For practice, off-policy policy gradient theorem \cite{degris2012off} considers to approximate $\nabla_{\theta} J(\pi)$ as follows
	\begin{flalign}
		\begin{aligned}
			&\nabla_{\theta} J(\pi)  \\
			&\approx \mathcal{g}_{\text{off}}= \mathbb{E}_{s\sim d_{\mu}, a \sim \mu(a|s)} \left[ \rho(s, a) \nabla_{\theta}\log\pi(a|s) Q_{\pi}(s,a)   \right]\label{expectation formulation-01}
		\end{aligned}
	\end{flalign}
	where $\rho(s, a)=\frac{\pi(a|s)}{\mu(a|s)}$. We denote the $\mathcal{g}_{\text{off}}$ in Eq. (\ref{expectation formulation-01}) as off-policy policy gradient (OPPG) estimator. Considering the data $\{s_{t},a_{t},r_{t}\}_{t=0}^{T}$ generated under the behavior policy $\mu$, we can derive the empirical OPPG estimator using Monte Carlo samples,
	\begin{flalign}
		\label{expectation formulation}
		\hat{\mathcal{g}}_{\text{off}}=\frac{1}{T}\sum_{t=0}^{T}
		\rho(s_t, a_t)\nabla_{\theta}\log\pi(a_t|s_t) Q_{\pi}(s_t,a_t).
	\end{flalign}
	
	The empirical OPPG estimator in Eq. (\ref{expectation formulation}) suffers from high variance \citep{wang2016sample, ni2022optimal, zhang2021convergence}, which is caused by both Monte Carlo techniques \cite{sutton2018reinforcement} and importance sampling ratio $\rho(s_t, a_t)$. High variance in the OPPG estimator negatively impacts the effectiveness of off-policy policy gradient methods in real-world applications.

	\textbf{Off-Policy Actor-Critic}.
	To mitigate the high variance issue encountered in off-policy policy gradient methods, the off-policy actor-critic (Off-PAC) \cite{degris2012off} introduces a state-dependent baseline to the OPPG estimator to reduce its variance. 
	Specifically, by incorporating a state-dependent baseline ($V_{\pi}(s)$) into the OPPG estimator, Off-PAC reduces variance without adding bias, 
	\begin{flalign}	\label{state dependent baseline}
		\begin{aligned}
			\mathcal{g}_{\text{off}}(V_{\pi}(s))
			= \mathbb{E}_{s\sim d_{\mu}, a \sim \mu(a|s)} \left[ \rho(s, a) \nabla_{\theta}\log\pi(a|s) A_{\pi}(s,a) \right],
		\end{aligned}
		\vspace{-10pt}
	\end{flalign}
	where $ A_{\pi}(s,a)=Q_{\pi}(s,a)  - V_{\pi}(s)$. 
	
	However, the state-dependent baseline cannot use the information of action, which prevents the Off-PAC from predicting accurate average performance and achieving lower variance. Addressing this limitation is the central aim of our paper, wherein we explore how incorporating action information can efficiently reduce the variance introduced by Monte Carlo techniques and the importance sampling ratio in the off-policy policy gradient estimator.

	\section{Action-Dependent Baseline for Off-Policy Policy Gradient (OPPG) Estimator}
	In this section, we propose an action-dependent baseline for the OPPG estimator, designed to reduce its variance by utilizing the additional action information. Concretely, we initially propose an action-dependent baseline that does not bring bias for the OPPG estimator. Subsequently, we derive the optimal action-dependent baseline theoretically, guiding the design of the OPPG estimator in our method. Finally, we demonstrate the superiority of the optimal action-dependent baseline by theoretically highlighting the reduced OPPG variance compared to the optimal state-dependent baseline.

	\subsection{Unbiased Off-Policy Action-Dependent Baseline}
	
	In this paper, we explore a widely adopted policy parameterization characterized by a multivariate Gaussian with diagonal covariance, reflecting the rich internal structures commonly found in practice (see \citep[Chapter 13]{sutton2018reinforcement}). This policy parameterization means that each $i$-th component $a^i$ of the action $a$ is conditionally independent of other components at any given state $s$. Leveraging the policy's conditional independence, in an $m$-dimensional action space, the policy $\pi(a|s)$ is represented as follows,
	\begin{flalign}
		\pi(a|s) = \prod_{i=1}^{m} \pi(a^i |s),
	\end{flalign}
	where each $\pi(a^i |s)$ represents the policy value for the $i$-th action component $a^i$. 
	Using this parameterization, we reformulate the OPPG as shown in Eq. (\ref{expectation formulation-01}),
	\begin{flalign}\label{gradient estimator with multivariate gaussian}
		\mathcal{g}_{\text{off}}&=\mathbb{E}_{s\sim d_{\mu}, a\sim  \mu(a|s)} \left[\sum_{i=1}^{m} \rho(s, a) \nabla_{\theta}\log\pi(a^i|s)  Q_{\pi}(s,a)   \right]. 
	\end{flalign}
	The derivation of Eq. (\ref{gradient estimator with multivariate gaussian}) is detailed in Appendix \ref{gradient_estimator_appendix}.

	Eq. (\ref{gradient estimator with multivariate gaussian}) suggests the OPPG estimator is the sum of $m$ factors. 
	For the $i$-th factor in Eq. (\ref{gradient estimator with multivariate gaussian}), we use an action-dependent baseline $b_i$. 
	To ensure the proposed baseline remains unbiased for the OPPG estimator, we introduce an action-dependent baseline $b_{i}(s, a^{-i})$ for the $i$-th factor, where $a^{-i}$ represents all action components excluding the $i$-th component. 
	The action-dependent baseline $b_{i}(s, a^{-i})$ is independent of the $i$-th action and depends on the state and all other actions.

	Using the action-dependent baseline $b_{i}(s, a^{-i})$, we define the OPPG estimator with $b_{i}(s, a^{-i})$ as follows, 
	\begin{flalign}
		\label{non-biased estimator}  
		\mathcal{g}_{\text{off}}(b)
		:= \mathbb{E}_{s\sim d_{\mu}, a\sim \mu}\left[\sum_{i=1}^m \mathcal{g}_{\text{off}}^i(b) \right],
	\end{flalign}
	each $\mathcal{g}_{\text{off}}^i(b) $ is defined as follows,
	\begin{flalign}\label{i_th_baseline}
		\mathcal{g}_{\text{off}}^i(b)  = \rho(s, a) \nabla_{\theta}\log\pi(a^i|s)  ( Q_{\pi}(s,a) - b_i(s, a^{-i})),
	\end{flalign}
	where $	\rho(s, a) =\frac{\pi(a|s)}{\mu(a|s)}$, $m$ is the number of action dimensions.

	\begin{remark}[Unbiasedness of Action-dependent Baseline]\label{remark_unbiasedness}
		The action-dependent baseline in the proposed OPPG estimator, as shown in Eq. (\ref{non-biased estimator}), maintains the unbiased nature of the original OPPG estimator in Eq. (\ref{gradient estimator with multivariate gaussian}), supported by the fact that,
		\begin{flalign}\label{action_dependent_equation}
			\mathbb{E}_{a \sim \mu(a|s)}\left[ \rho(s, a)\nabla_{\theta}\log\pi(a^i |s) b_{i}(s, a ^{-i} ) \right]=0,
		\end{flalign}
		which implies the following fact
		\begin{flalign}\label{unbiased_oppg_estimator}
			\mathcal{g}_{\text{off}} =\mathcal{g}_{\text{off}}(b).
		\end{flalign}
	\end{remark}
	\begin{proof}
		See Appendix \ref{Unbiassness}.
	\end{proof}

	\subsection{Optimal Off-Policy Action-Dependent Baseline}
	
	Using the above baseline $b_{i}(s, a^{-i})$, we derive the optimal action-dependent baseline that minimizes OPPG variance in this section, \emph{i.e.}, 
	\begin{flalign}
		b^{\star}(s,a)=\arg\min_{b}\mathrm{Var}\left[\mathcal{g}_{\text{off}}(b)\right].
	\end{flalign}
	For clarity and brevity, we denote $b^{\star}(s,a)$ as $b^{\star}$.
	Furthermore, we decompose the optimal baseline $b^{\star}$ as follows, 
	\begin{flalign}
		b^{\star}=\bigcup_{i\in \{1,2,\cdots,m\}}\{b^{\star}_{i}(s,a^{-i})\}.
	\end{flalign}

	\begin{theorem}
		[Optimal Off-Policy Action-Dependent Baseline]
		\label{theorem-optimal-baseline}
		Let $\mathcal{g}_{\text{off}}(b)$ be the off-policy policy gradient estimator defined in Eq. (\ref{non-biased estimator}). The optimal off-policy action-dependent baseline that minimizes the variance of $\mathcal{g}_{\text{off}}(b)$ is
		\begin{flalign}
			\label{them-baseline-01}
			b^{\star}_i(s, a^{-i}) =\frac{\mathbb{E}_{a^i \sim \mu} \left[ \lVert \rho(s, a) \nabla_{\theta}\log\pi(a^i|s) \rVert^2 Q_{\pi}(s,a) \right]}{\mathbb{E}_{a^i \sim \mu} \left[ \lVert \rho(s, a) \nabla_{\theta}\log\pi(a^i|s) \rVert^2 \right]},
		\end{flalign}
		where $\rho(s, a)=\frac{\pi(a|s)}{\mu(a|s)}$.
	\end{theorem}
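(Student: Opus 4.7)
The plan is to translate variance minimization into second-moment minimization, exploit the product structure of the Gaussian policy to decouple the contributions of the $m$ action coordinates, and then solve $m$ independent scalar quadratic problems whose minima can be read off by completing the square.

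First, by Remark~\ref{remark_unbiasedness} the mean of the random vector
\begin{flalign*}
X(b) := \sum_{i=1}^{m} \rho(s,a)\,\nabla_{\theta}\log\pi(a^i|s)\,\bigl(Q_{\pi}(s,a)-b_i(s,a^{-i})\bigr),
\end{flalign*}
taken over $s\sim d_{\mu}$, $a\sim\mu$, equals $\mathcal{g}_{\text{off}}$ and is therefore independent of $b$. Hence $\mathrm{Var}[\mathcal{g}_{\text{off}}(b)]$ and $\mathbb{E}[\|X(b)\|^2]$ differ only by a $b$-free constant, and it suffices to minimize the second moment.

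Writing $g_i := \rho(s,a)\,\nabla_{\theta}\log\pi(a^i|s)$ and expanding the squared norm yields
\begin{flalign*}
\mathbb{E}[\|X(b)\|^2] = \sum_{i,j}\mathbb{E}\bigl[g_i^{\top}g_j\,(Q_{\pi}-b_i)(Q_{\pi}-b_j)\bigr].
\end{flalign*}
The crucial structural fact I would invoke is that under the diagonal-covariance Gaussian parameterization each marginal $\pi(a^i|s)$ is governed by its own disjoint block of parameters, so the score vectors $\nabla_{\theta}\log\pi(a^i|s)$ and $\nabla_{\theta}\log\pi(a^j|s)$ have disjoint support in $\theta$ for $i\neq j$. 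Consequently $g_i^{\top}g_j=0$ pointwise whenever $i\neq j$, and the double sum collapses to the diagonal expression $\sum_{i=1}^{m}\mathbb{E}[\|g_i\|^2(Q_{\pi}-b_i)^2]$. With the coordinates decoupled I would then optimize each $b_i$ pointwise by conditioning on $(s,a^{-i})$: the only $b_i$-dependent piece is the scalar quadratic $\mathbb{E}_{a^i\sim\mu}[\|g_i\|^2(Q_{\pi}-b_i)^2\mid s,a^{-i}]$, whose unique minimizer (by first-order optimality or completing the square in the constant $b_i$) is $b_i^{\star}(s,a^{-i}) = \mathbb{E}_{a^i}[\|g_i\|^2 Q_{\pi}]/\mathbb{E}_{a^i}[\|g_i\|^2]$. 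Resubstituting the definition of $g_i$ reproduces Eq.~(\ref{them-baseline-01}).

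The main obstacle is the orthogonality step: it is precisely what decouples the joint optimization over $(b_1,\ldots,b_m)$ into $m$ independent one-dimensional problems and what yields the clean closed form. I would verify it by writing each $\pi(a^i|s)$ as a Gaussian with its own mean and log-standard-deviation parameters and checking that $\nabla_{\theta}\log\pi(a^i|s)$ is supported only on that coordinate's parameter block. Without this structural property, the first-order conditions for the baselines become a coupled linear system rather than a diagonal one, and the stated $b_i^{\star}$ would acquire cross-terms depending on the other baselines.
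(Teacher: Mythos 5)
Your proof is correct and follows essentially the same route as the paper: both rely on score orthogonality across action dimensions, $\nabla_{\theta}\log\pi(a^i|s)^{\top}\nabla_{\theta}\log\pi(a^j|s)=0$ for $i\neq j$, to decouple the $m$ baselines, and then minimize the weighted quadratic $\mathbb{E}_{a^i\sim\mu}\bigl[\lVert g_i\rVert^2 (Q_{\pi}-b_i)^2\mid s,a^{-i}\bigr]$ pointwise in $(s,a^{-i})$. The only differences are cosmetic — you minimize the second moment directly while the paper expands the variance into per-dimension variances plus covariances before discarding the $b$-independent terms — though note the paper states the orthogonality only as an approximation ($\approx 0$, since in practice parameters are shared across dimensions through a common network), whereas you assert it exactly from disjoint parameter blocks.
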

	\begin{proof}
		See Appendix \ref{optimal_action_dependent_baseline_proof}.
	\end{proof}
	
	Theorem \ref{theorem-optimal-baseline} provides a theoretical foundation for optimal action-dependent baseline in OPPG estimator, as shown in Eq. (\ref{non-biased estimator}), by focusing on variance reduction.

	\subsection{Variance Reduction Compared to the Optimal State-Dependent Baseline}
	
	To further highlight the benefits of the optimal action-dependent baseline from Theorem \ref{theorem-optimal-baseline}, we evaluate the variance difference in OPPG between the optimal state and action-dependent baseline, demonstrating the optimal action-dependent baseline's effectiveness in reducing variance compared to the state-dependent baseline.

	To clarify this variance difference in OPPG, we initially present the OPPG estimator incorporating the optimal action-dependent baseline ($b^{\star}_i(s, a^{-i}) $) for the $i$-th action dimension,
	\begin{flalign}
		\mathcal{g}_{\text{off}}^i(b^{\star}_i(s, a^{-i}))  = \rho(s, a) \nabla_{\theta}\log\pi(a^i|s)  ( Q_{\pi}(s,a) - b^{\star}_i(s, a^{-i})).
	\end{flalign}
	Next, we introduce the OPPG estimator with the optimal state-dependent baseline ($b^{\star}(s)$) for the $i$-th action dimension,
	\begin{flalign}
		\mathcal{g}_{\text{off}}^i(b^{\star}(s))  = \rho(s, a) \nabla_{\theta}\log\pi(a^i|s)  ( Q_{\pi}(s,a) - b^{\star}(s)).
	\end{flalign}
	Using the described OPPG estimators, we define the variance difference between them,
	\begin{flalign}\label{variance_diff_definition}
		\small
		\Delta \mathrm{Var}(\mathcal{g}_{\text{off}}^i(b^{\star}(s))) \triangleq\mathrm{Var}\left[ \mathcal{g}_{\text{off}}^i(b^{\star}(s)) \right]- \mathrm{Var} \left[ \mathcal{g}_{\text{off}}^i(b^{\star}_i(s, a^{-i})) \right]. 
	\end{flalign}
	This variance difference is analyzed in Theorem \ref{excess_variance_theorem}.

	\begin{theorem}[Variance Difference between Optimal State and Action-Dependent Baseline]\label{excess_variance_theorem}
		Let the variance difference $\Delta \mathrm{Var}(\mathcal{g}_{\text{off}}^i(b^{\star}(s))) $ between optimal state and action-dependent baseline for OPPG estimator be defined in Eq. (\ref{variance_diff_definition}). This variance difference satisfies,
		\begin{flalign}
			\begin{aligned}
				& \Delta \mathrm{Var}(\mathcal{g}_{\text{off}}^i(b^{\star}(s)))\\
				&= \mathbb{E}_{s\sim  d_{\mu}, a^{-i}\sim \mu}\Big[ \Big( b^{\star}(s)- b^{\star}_i(s, a^{-i}) \Big)^2 \\
				&\qquad\qquad\qquad\qquad \mathbb{E}_{a^i \sim \mu}\left[ \lVert \rho(s, a) \nabla_{\theta}\log\pi(a^i|s) \rVert^2 \right]\Big].
			\end{aligned}
		\end{flalign}
	\end{theorem}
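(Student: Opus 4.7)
The plan is to reduce the variance difference to a difference of second moments and then perform a weighted bias–variance decomposition in the inner expectation over $a^{i}$. First, I would observe that both $b^{\star}(s)$ and $b^{\star}_{i}(s,a^{-i})$ leave $\mathcal{g}_{\text{off}}^{i}$ unbiased: the argument of Remark \ref{remark_unbiasedness} applies component-wise, since $b^{\star}(s)$ is a fortiori independent of $a^{i}$. Consequently $\mathcal{g}_{\text{off}}^{i}(b^{\star}(s))$ and $\mathcal{g}_{\text{off}}^{i}(b^{\star}_{i}(s,a^{-i}))$ share the same mean, so
\begin{flalign*}
\Delta\mathrm{Var}(\mathcal{g}_{\text{off}}^{i}(b^{\star}(s))) = \mathbb{E}\bigl[\lVert \mathcal{g}_{\text{off}}^{i}(b^{\star}(s))\rVert^{2}\bigr] - \mathbb{E}\bigl[\lVert \mathcal{g}_{\text{off}}^{i}(b^{\star}_{i}(s,a^{-i}))\rVert^{2}\bigr].
\end{flalign*}

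Next, writing $f_{i}(s,a) := \rho(s,a)\nabla_{\theta}\log\pi(a^{i}|s)$ for brevity, I would use the tower property to split the joint expectation as an outer average over $(s,a^{-i})\sim d_{\mu}\otimes\mu$ followed by an inner average over $a^{i}\sim \mu$. Because both $b^{\star}(s)$ and $b^{\star}_{i}(s,a^{-i})$ are constants with respect to the inner expectation, I can expand
\begin{flalign*}
\lVert f_{i}\rVert^{2}(Q_{\pi}-b^{\star}(s))^{2} = \lVert f_{i}\rVert^{2}\bigl[(Q_{\pi}-b^{\star}_{i}) + (b^{\star}_{i}-b^{\star}(s))\bigr]^{2}
\end{flalign*}
and handle the three resulting terms individually. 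The cross term carries a factor $\mathbb{E}_{a^{i}\sim\mu}[\lVert f_{i}\rVert^{2}(Q_{\pi}-b^{\star}_{i})]$, which vanishes because Theorem \ref{theorem-optimal-baseline} gives precisely $b^{\star}_{i}(s,a^{-i}) = \mathbb{E}_{a^{i}}[\lVert f_{i}\rVert^{2}Q_{\pi}]/\mathbb{E}_{a^{i}}[\lVert f_{i}\rVert^{2}]$. What remains is the identity
\begin{flalign*}
\mathbb{E}_{a^{i}}\bigl[\lVert f_{i}\rVert^{2}(Q_{\pi}-b^{\star}(s))^{2}\bigr] - \mathbb{E}_{a^{i}}\bigl[\lVert f_{i}\rVert^{2}(Q_{\pi}-b^{\star}_{i})^{2}\bigr] = (b^{\star}(s)-b^{\star}_{i})^{2}\,\mathbb{E}_{a^{i}}\bigl[\lVert f_{i}\rVert^{2}\bigr],
\end{flalign*}
and reinserting the outer expectation over $(s,a^{-i})$ produces the claimed formula.

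The main obstacles are two bookkeeping checks rather than any deep inequality: first, verifying that the two estimators genuinely have the same expectation so that the variance difference collapses to a second-moment difference (this reuses the score-function identity that underlies Remark \ref{remark_unbiasedness}); and second, invoking the optimality characterization of $b^{\star}_{i}$ from Theorem \ref{theorem-optimal-baseline} in exactly the right form to kill the cross term. The rest is a standard weighted bias–variance decomposition of the Pythagorean type, with weights $\lVert f_{i}\rVert^{2}$ and center $b^{\star}_{i}$.
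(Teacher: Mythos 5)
Your proposal is correct and is essentially the paper's argument: the paper routes the computation through a general lemma giving $\Delta\mathrm{Var}$ for an arbitrary baseline versus $b^{\star}_{i}$ (obtained by expanding the variance as a quadratic in $b_{i}$ and completing the square around the weighted mean $b^{\star}_{i}$), then specializes to $b^{\star}(s)$. Your weighted Pythagorean decomposition with the cross term killed by $b^{\star}_{i}=\mathbb{E}_{a^{i}}[\lVert f_{i}\rVert^{2}Q_{\pi}]/\mathbb{E}_{a^{i}}[\lVert f_{i}\rVert^{2}]$ is the same algebra, just organized without the intermediate lemma.
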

	\begin{proof}
		See Appendix \ref{optimal_state_dependent_excess_variance}. 
	\end{proof}
	
	Theorem \ref{excess_variance_theorem} highlights the variance reduction achieved by the optimal action-dependent baseline compared to the state-dependent baseline, which indicates its superior efficacy in reducing OPPG variance. 
	This optimal action-dependent baseline formulated in Theorem \ref{theorem-optimal-baseline} is the foundation for us to design the following algorithm.

	\section{Proposed Off-OAB}

	In this section, building on the baseline outlined in Theorem \ref{theorem-optimal-baseline}, we introduce an approximated optimal baseline for the OPPG estimator to lower computational costs. Besides, we demonstrate its proximity to the optimal baseline, particularly when the policy factor weakly correlates with the action value. 
	Using this approximated baseline, we propose an off-policy policy gradient method incorporating this optimal action-dependent baseline, named Off-OAB, and detail its implementation.

	\begin{algorithm}[b]
		\caption{Off-OAB Method}
		\label{alg:variance reduction method}
		\begin{algorithmic}
			\STATE \textbf{Input:} Environment \emph{E}, batch size $B$, discount factor $\gamma$, replay buffer $\mathcal{B}$, total timesteps $T$, learning rate $\lambda_{Q}$, $\lambda_{\pi}$, decay rate for critic target network $\tau$;
			\STATE Initialize critic network $Q_{w}$ and actor network $\pi_{\theta}$ with random parameters $w$, $\theta$;\\
			\STATE  Initialize target critic network $\bar{w} \leftarrow w$; replay buffer $\mathcal{B}$;
			\FOR{each iteration}
			\STATE \textcolor{blue}{--- --- --- \texttt{Collect data} --- --- ---}
			\FOR{each environment step}
			\STATE Sample action from behavior policy: $a_t \sim \mu(a_t|s_t) $;
			\STATE Receive next state $s_{t+1}$ and reward $r(s_t,a_t)$ from \emph{E}: 
			\STATE \begin{center} $s_{t+1} \sim P(s_{t+1}|s_t, a_t) $; \end{center}
			\STATE Store the transition in the replay buffer: 
			\STATE 
			\begin{center}
				$\mathcal{B} \leftarrow \mathcal{B} \cup \{(s_t, a_t,r(s_t,a_t), s_{t+1}, \mu(a_t|s_t))\}; $
			\end{center} 
			\ENDFOR
			\FOR{each gradient step}
			\STATE  Perform update procedure in Algorithm \ref{update_procedure}.
			\ENDFOR
			\ENDFOR
		\end{algorithmic} 
	\end{algorithm}

	\subsection{Approximated Optimal Off-Policy Baseline}
	Recall the optimal action-dependent baseline in Theorem \ref{theorem-optimal-baseline},
	\begin{flalign}
		\label{tem-optimal-action-dependent-baseline}
		b^{\star}_i(s, a^{-i}) =\frac{\mathbb{E}_{a^i \sim \mu} \left[ \lVert \rho(s, a) \nabla_{\theta}\log\pi(a^i|s) \rVert^2 Q_{\pi}(s,a) \right]}{\mathbb{E}_{a^i \sim \mu} \left[ \lVert \rho(s, a) \nabla_{\theta}\log\pi(a^i|s) \rVert^2 \right]}.
	\end{flalign}
	However, practically implementing Eq. (\ref{tem-optimal-action-dependent-baseline}) faces the challenge of high computational cost. Calculating the optimal action-dependent baseline involves repeated computation of $\lVert \rho(s, a) \nabla_{\theta}\log\pi(a^i|s) \rVert^2 $, which is computationally demanding.

	To tackle this challenge, we approximate the optimal baseline in Eq. (\ref{tem-optimal-action-dependent-baseline}) by adopting a similar action-dependent baseline,
	\begin{flalign}\label{appro_b_s_a}
		b_i(s, a^{-i}) = \mathbb{E}_{a^i \sim \mu}[Q_{\pi}(s,a)]
	\end{flalign}
	for computational efficiency. 
	We choose this analogous baseline because it simplifies the computation process to the estimation of $Q_{\pi}(s,a)$. Moreover, this baseline closely approximates the optimal one, especially when the policy factor is weakly correlated with the action value, as demonstrated in Theorem \ref{theorem_Q_pi_sa}.
	\begin{theorem}[Close to Optimal Action-Dependent Baseline]\label{theorem_Q_pi_sa}
		Define the approximated baseline as $\mathbb{E}_{a^i \sim \mu}[Q_{\pi}(s,a)]$ in Eq. (\ref{appro_b_s_a}). 
		The variance of this approximated baseline is close to that of the optimal action-dependent baseline when the policy factor is weakly correlated with the action value,
		\begin{flalign}	
			\begin{aligned}
				&\Delta \mathrm{Var}( \mathcal{g}_{\text{off}}^i(\mathbb{E}_{a^i \sim \mu}[Q_{\pi}(s,a)])) \\
				&= \mathrm{Var}\left[ \mathcal{g}_{\text{off}}^i(\mathbb{E}_{a^i \sim \mu}[Q_{\pi}(s,a)]) \right]- \mathrm{Var} \left[ \mathcal{g}_{\text{off}}^i(b^{\star}_i(s, a^{-i})) \right] 
				\approx 0, \\
				& \text{when},\\
				& \quad \mathbb{E}_{a^i \sim \mu} \left[ \lVert \rho(s, a) \nabla_{\theta}\log\pi(a^i|s) \rVert^2 Q_{\pi}(s,a) \right] \\
				&\quad \approx \mathbb{E}_{a^i \sim \mu} \left[ \lVert \rho(s, a) \nabla_{\theta}\log\pi(a^i|s) \rVert^2  \right] \mathbb{E}_{a^i \sim \mu}[Q_{\pi}(s,a)].
			\end{aligned}
		\end{flalign}
	\end{theorem}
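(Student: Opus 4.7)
The plan is to reduce the statement to the same type of identity used in Theorem~\ref{excess_variance_theorem}. First, I would invoke Remark~\ref{remark_unbiasedness} to observe that any baseline of the form $b_i(s,a^{-i})$ leaves $\mathbb{E}[\mathcal{g}_{\text{off}}^i(b)]$ unchanged, so $\Delta\mathrm{Var}$ collapses to the difference of second moments of $\mathcal{g}_{\text{off}}^i$ evaluated at the approximation $\tilde b_i(s,a^{-i}):=\mathbb{E}_{a^i\sim\mu}[Q_\pi(s,a)]$ and at $b^{\star}_i$. This is the same structural step used in the earlier variance-difference proof and lets me work throughout with $\mathbb{E}[(\mathcal{g}_{\text{off}}^i(\cdot))^2]$ rather than variances.

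Next, writing $w(s,a):=\lVert\rho(s,a)\nabla_\theta\log\pi(a^i|s)\rVert^2$, I would condition on $(s,a^{-i})$ and expand the inner expectation over $a^i$ of $w(Q_\pi-b_i)^2$. The resulting expression is a quadratic in the scalar $b_i(s,a^{-i})$; its pointwise minimiser is exactly $b^{\star}_i$ from Theorem~\ref{theorem-optimal-baseline}, so completing the square yields the clean identity
\[
\Delta\mathrm{Var}=\mathbb{E}_{s\sim d_\mu,\,a^{-i}\sim\mu}\!\left[\mathbb{E}_{a^i\sim\mu}[w(s,a)]\,\bigl(\tilde b_i(s,a^{-i})-b^{\star}_i(s,a^{-i})\bigr)^2\right].
\]
This mirrors the derivation of Theorem~\ref{excess_variance_theorem}, with $b^{\star}(s)$ simply replaced by $\tilde b_i(s,a^{-i})$, so I expect to be able to reuse the same algebraic template almost verbatim.

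Finally, I would plug in the weak-correlation hypothesis $\mathbb{E}_{a^i\sim\mu}[wQ_\pi]\approx\mathbb{E}_{a^i\sim\mu}[w]\,\mathbb{E}_{a^i\sim\mu}[Q_\pi]$. Dividing both sides by $\mathbb{E}_{a^i\sim\mu}[w]$ turns the right-hand side into $\tilde b_i(s,a^{-i})$ and the left-hand side into the definition of $b^{\star}_i(s,a^{-i})$ from Theorem~\ref{theorem-optimal-baseline}, hence $\tilde b_i\approx b^{\star}_i$ and the squared gap in the integrand is negligible, giving $\Delta\mathrm{Var}\approx 0$.

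The main obstacle I anticipate is not the algebra, which is a routine completion of the square, but giving rigorous meaning to the informal ``$\approx$''. A fully quantitative version would require a norm on the weak-correlation gap, for example $\mathrm{Cov}_{a^i\sim\mu}(w,Q_\pi)$, and would propagate that gap through the identity above to produce a bound of the form $\Delta\mathrm{Var}\le\mathbb{E}_{s,a^{-i}}\!\left[(\mathrm{Cov}_{a^i\sim\mu}(w,Q_\pi))^2/\mathbb{E}_{a^i\sim\mu}[w]\right]$. Since the statement only asserts the approximation qualitatively, the proof can stop at the symbolic identity plus a one-line substitution of the hypothesis.
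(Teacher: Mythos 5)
Your proposal is correct and follows essentially the same route as the paper: the completed-square identity you derive for $\Delta\mathrm{Var}$ is exactly the paper's Lemma~\ref{baseline_excess_variance_theorem}, obtained there too by expanding the conditional second moment as a quadratic in $b_i(s,a^{-i})$ (with unbiasedness making the mean term baseline-independent), and the final step of dividing the weak-correlation hypothesis by $\mathbb{E}_{a^i\sim\mu}[\lVert\rho(s,a)\nabla_\theta\log\pi(a^i|s)\rVert^2]$ to conclude $\tilde b_i\approx b^{\star}_i$ matches the paper's argument. Your closing remark about quantifying the ``$\approx$'' via a covariance bound goes slightly beyond what the paper does, but the core proof is the same.
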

	\begin{proof}
		See Appendix \ref{excess_variance_Q_pi_sa}.
	\end{proof}

	\begin{algorithm}
		\caption{Actor and Critic Weights Update Procedure}\label{update_procedure}
		\begin{algorithmic}
			\STATE \textcolor{blue}{--- --- --- \texttt{Update critic weights} --- --- ---}
			\STATE Sample $N$ transitions  from $\mathcal{B}$: $\{(s_j, a_j, r_j, s'_j,\mu_j ), j=1 \cdots N \} $;
			\STATE Set critic update target:
			\STATE 
			\begin{center}
				$y_j \leftarrow r_j + \gamma \max_{a'} Q_{\bar{w}}(s'_j, a'_j) $, \\
				$\delta_{j} \leftarrow y_j - Q_{w}(s_j,a_j)$;
			\end{center}
			\STATE Update the critic network weights: 
			\[
			w \leftarrow w - \lambda_{Q} \nabla_{w}  \left( \frac{1}{N}\sum_{j=1}^{N} \delta_j^2  \right) ;
			\]
			\STATE Update target critic network weights:
			\[
			\bar{w} \leftarrow \tau w + (1-\tau) \bar{w};
			\]	
			\STATE \textcolor{blue}{--- --- --- \texttt{Update actor weights} --- --- ---}
			\STATE Calculate baseline $b_i(s_j, a^{-i}_j)$ by Eq. (\ref{appro_b_s_a}):
			\begin{center}
				$b_i(s_j, a^{-i}_j)= \mathbb{E}_{a^i \sim \mu}[Q_{w}(s_j,a_j)]$;
			\end{center}
			\STATE
			Set the importance ratio as $\rho_j=\frac{\pi_{\theta}(a_j|s_j)}{\mu(a_j|s_j)}$ and obtain the gradient by Eq. (\ref{i_th_baseline}):
			\STATE
			\begin{center}
				\begin{flalign}
					\mathcal{g}_{\text{off}}^i(b_j) = \rho_j \nabla_{\theta}\log\pi_{\theta}(a^i_j|s_j)  \left( Q_{w}(s_j,a_j) - b_i(s_j, a^{-i}_j) \right ); \nonumber
				\end{flalign}
			\end{center}
			\STATE Update policy weights using $\mathcal{g}_{\text{off}}^{i}(b_j)$:  
			\[ 
			\theta \leftarrow  \theta + \lambda_{\pi} \frac{1}{N}\sum_{j=1}^{N}  \left( \sum_{i=1}^m \mathcal{g}_{\text{off}}^i(b_j) \right) ;
			\]
		\end{algorithmic}
	\end{algorithm}

	\begin{figure*}[htb]
		\centering
		\includegraphics[width=18cm,height=10cm]{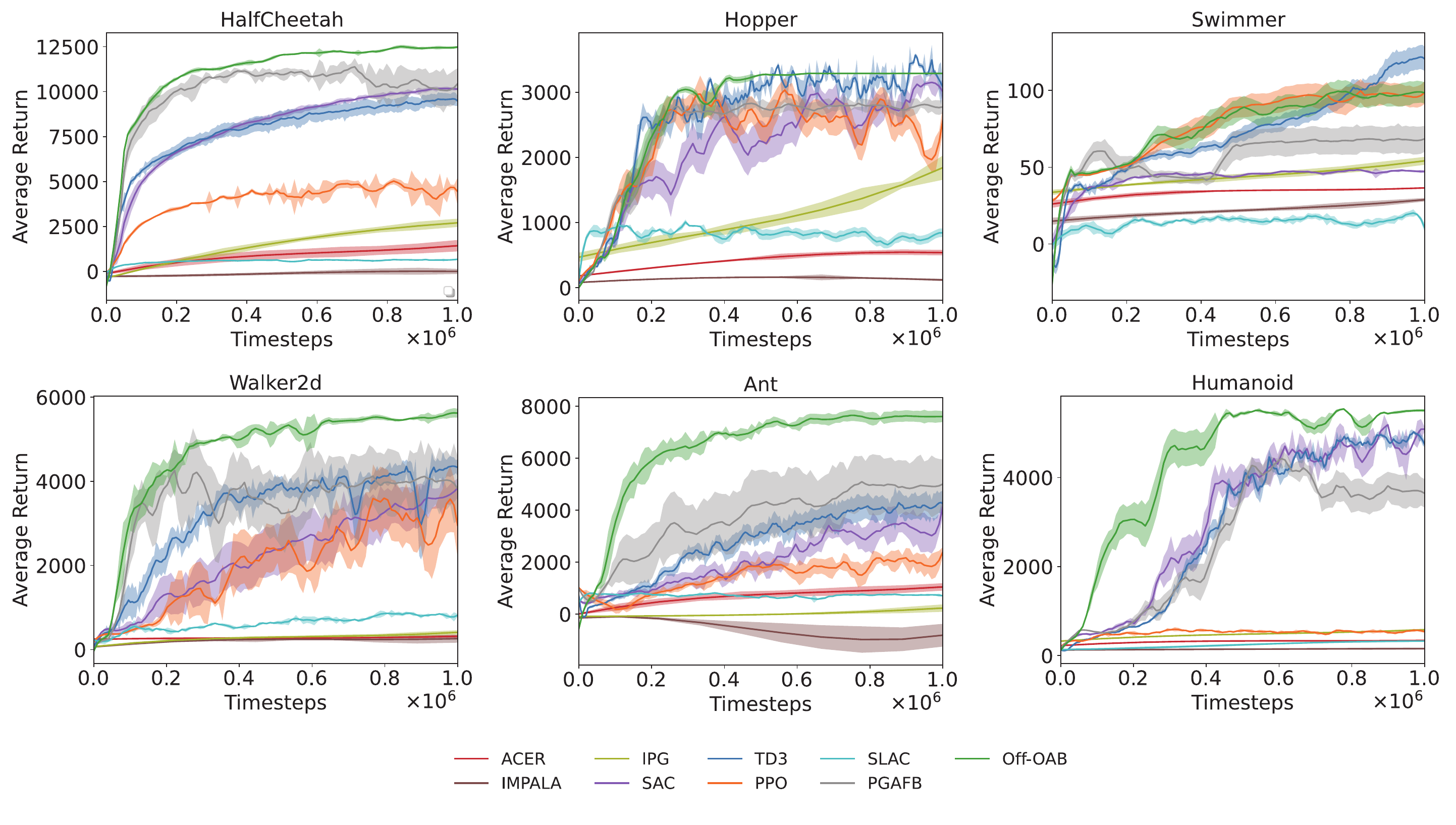}
		\vspace{-10pt}
		\caption{Results of proposed Off-OAB method and other state-of-the-art deep reinforcement learning methods (ACER, IMPALA, IPG, SAC, TD3, PPO, SLAC, and PGAFB) on representative tasks. The standard deviation over five seeded runs is denoted by the shaded region. The $X$-aixs and $Y$-axis separately denote environment timesteps and average return.}
		\label{State-of-the-art Methods}
	\end{figure*}

	\subsection{Detailed Algorithm Implementation for Off-OAB} 
	
	Using the approximated optimal baseline in Eq. (\ref{appro_b_s_a}), we propose an off-policy policy gradient method leveraging this optimal action-dependent baseline (Off-OAB) to reduce the OPPG estimator's variance. Algorithm \ref{alg:variance reduction method} details the complete process of the proposed Off-OAB.

	Algorithm \ref{alg:variance reduction method} starts by initializing the critic ($Q_{w}$) and actor ($\pi_{\theta}$) networks with random parameters $w$ and $\theta$, along with setting up the target critic network and replay buffer. 
	During each interaction, the agent collects data by interacting with the environment. It samples action $a_t$ from the behavior policy $\mu(a_t|s_t)$ at each state $s_t$ and timestep $t$. 
	Following this,  the environment provides the next state $s_{t+1}$ based on the transition dynamic probability $P(s_{t+1}|s_t, a_t) $, and a reward $r(s_t, a_t)$. The transition ($s_t, a_t, r(s_t, a_t), s_{t+1}, \mu(a_t|s_t)$) is then stored in the replay buffer $\mathcal{B}$.

	The learning process for the critic and actor networks is detailed in Algorithm \ref{update_procedure}. Specifically, we utilize Q-learning \cite{mnih2015human} to update the critic network by sampling $N$ transitions from replay buffer $\mathcal{B}$, which consist of  $\{(s_j, a_j, r_j, s'_j,\mu_j ), j=1 \cdots N \}$. The critic target is calculated by maximizing action values, which can be then used to update the critic network's weights. Subsequently, we align the target critic network's weights with the critic network's weights. With the critic network, we derive the action-dependent baseline $b_i(s, a^{-i})$ from Eq. (\ref{appro_b_s_a}) for the $i$-th action dimension. Utilizing this baseline, we compute the OPPG estimator as shown in Eq. (\ref{i_th_baseline}) to update the actor network's weights.

	\section{Experiments}	
	
	In this section, we conduct experiments on representative continuous control tasks to assess the efficacy of the proposed Off-OAB method utilizing the proposed action-dependent baseline. We first detail our experimental setup, including task descriptions, network configurations, and hyperparameters. We then evaluate the proposed Off-OAB by comparing it with state-of-the-art methods. We next compare the proposed action-dependent baseline with other baselines to validate the effectiveness of the proposed action-dependent baseline. We further assess the sample efficiency of our method by measuring the timesteps needed to achieve certain returns compared to other methods. Lastly, we evaluate our action-dependent baseline's effectiveness in reducing variance by analyzing policy gradient variances against other baselines.

	\subsection{Setup}
	This section outlines our experimental setup, including tasks, network setups, and hyperparameters. Experimental tasks consists of six representative continuous control tasks (\emph{HalfCheetah}, \emph{Hopper}, \emph{Swimmer}, \emph{Walker2d}, \emph{Ant},  \emph{Humanoid}) from OpenAI Gym \cite{brockman2016openai} and MuJoCo \cite{todorov2012mujoco}. 
	The experiments utilize policy and action value networks, each with two hidden layers of $256$ neurons. Key hyperparameters are:  a discount factor ($\gamma$) of $0.99$, a batch size of $256$, and a maximum of 1000 timesteps per episode. Learning rates for the critic and policy networks are set to $0.0003$, with a replay buffer size of $1$ million. The total timesteps for the experiments are $1$ million. Optimization is performed using Adam optimizer, and results are averaged over five runs with different seeds. Experiments are conducted on a server with four NVIDIA GeForce RTX 3090 GPUs and $512$GB of memory. For implementing the experimental methods, we used the code from the original authors as provided in their original papers. Details on hyperparameters and implementation \footnote{We will release our code when this paper is accepted.} are available in Appendix \ref{detailed_hy}.

	\begin{table*}[htb]
		\centering
		\caption{Results of the maximal average returns among the whole training process. We denote the best results among these methods in boldface.}
		\label{maximal average return comparison}
		\begin{adjustbox}{width=1\textwidth}
			\begin{tabular}{l| c  c  c  c c c c  c || c}
				\hline
				Return & ACER & IMPALA & IPG & SAC & TD3 & PPO &  SLAC & PGAFB & Off-OAB\\
				\hline\hline
				HalfCheetah & $1426.7$ & $6.2$ & $2701.4$ & $10178.3$ &$9598.8$ & $5023.7$  & $672.0$ & $11384.2$ & \bm{$12506.0$}\\
				\hline
				Hopper &  $545.6$ & $162.7$ & $1842.4$ & $3150.7$ & \bm{$3568.6$} & $3033.4$ &  $1008.3$ & $2829.5$ &  $3289.9$ \\
				\hline
				Swimmer  & $36.5$  & $28.8$ & $54.2$ & $48.3$  & \bm{$121.8$} & $99.8$  & $20.2$ & $68.5$& $99.5$\\
				\hline
				Walker2d & $314.9$ & $259.6$ & $408.6$ & $3836.4$  & $4359.6$  & $3598.8$  &  $865.3$ & $4258.6$& \bm{$5628.0$}\\
				\hline
				Ant & $1047.2$ & $-98.0$ &  $232.4$  & $4116.2$  &  $4297.8$ & $2381.2$ &  $817.4$ &$5096.1$ & \bm{$7652.3$}\\
				\hline    
				Humanoid & $341.2$ & $156.3$& $581.4$ & $5184.7$ & $5024.8$ & $591.1$ & $325.9$& $4425.8$ &  \bm{$5539.8$}\\
				\hline    
			\end{tabular}
		\end{adjustbox}
	\end{table*}
	
	\begin{figure*}[htb]
		\centering
		\includegraphics[width=18cm,height=10cm]{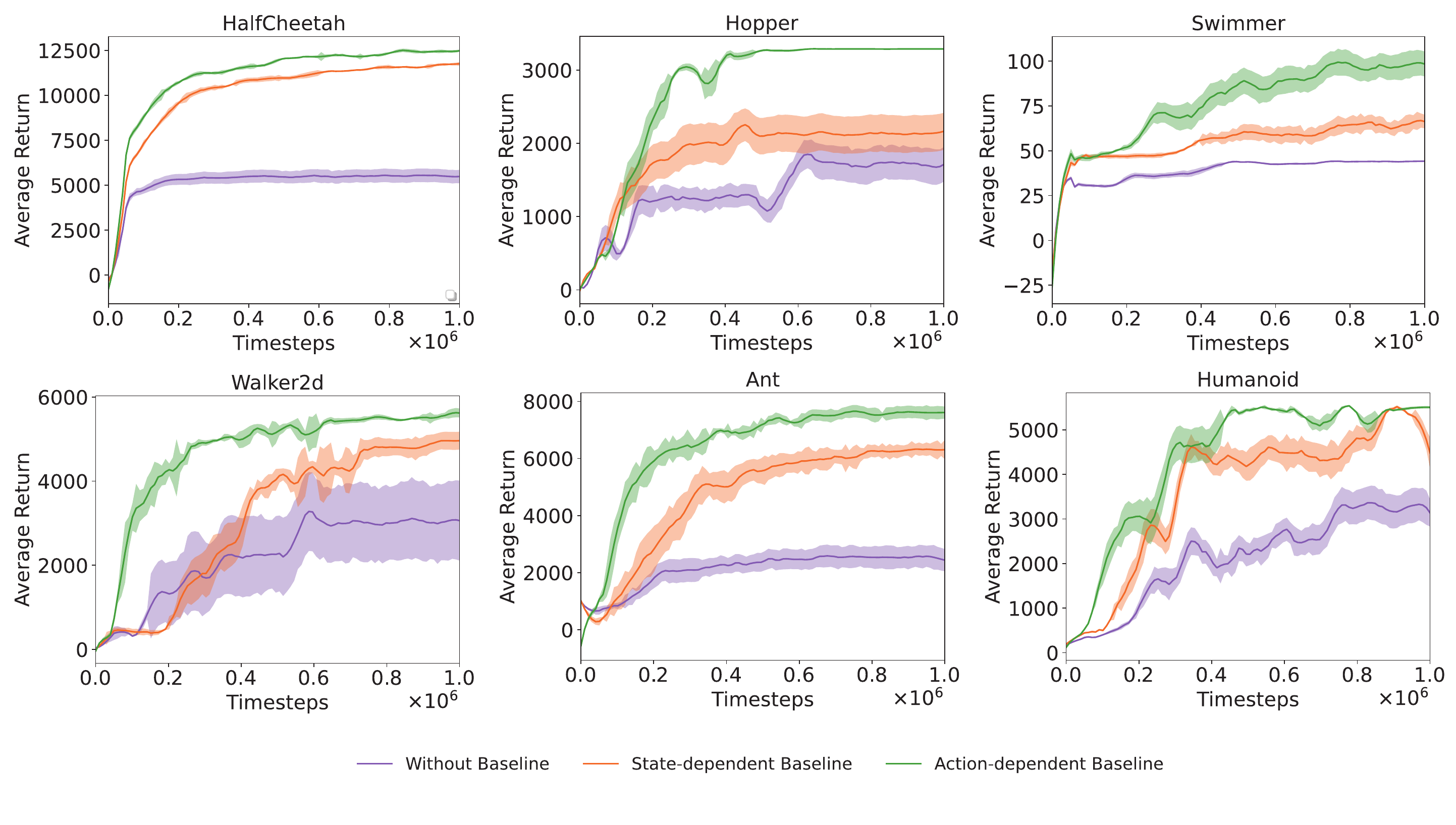}
		\vspace{-10pt}
		\caption{Results of our method with varying baselines. The standard deviation over five seeded runs is denoted by the shaded region. The $X$-aixs and $Y$-axis separately denote environment timesteps and average return.}
		\label{comparison_with_other_baselines}
	\end{figure*}

	\subsection{Comparison with the State-of-the-art Methods}
	In this section, we compare the proposed Off-OAB with state-of-the-art methods, namely ACER \cite{wang2016sample}, IMPALA \cite{espeholt2018impala}, IPG \cite{GuLGTL17}, SAC \cite{haarnoja2018soft}, TD3 \cite{fujimoto2018addressing}, PPO \cite{schulman2017proximal}, SLAC \cite{lee2020stochastic}, PGAFB\footnote{We refer to the method described in \cite{wu2018variance} as PGAFB for brevity.} \cite{wu2018variance}, to evaluate our method's performance.

	Figure \ref{State-of-the-art Methods} showcases the training curves comparing the proposed Off-OAB method with state-of-the-art methods: ACER, IMPALA, IPG, SAC, TD3, PPO, SLAC, and PGAFB. The results indicate that Off-OAB consistently achieves higher returns on several tasks, including HalfCheetah, Walker2d, Ant, and Humanoid, outperforming these state-of-the-art methods. Particularly on the Ant task, our method's performance significantly surpasses others. Moreover, Off-OAB reaches comparable or superior returns more efficiently, requiring fewer timesteps on most tasks. For instance, on the HalfCheetah, Walker2d, and Ant tasks, our method reaches stable performance within about $300000$ timesteps, showcasing greater efficiency compared to other methods.

	Table \ref{maximal average return comparison} presents the highest average returns achieved by our method and state-of-the-art methods during training. Our method consistently ranks the top one for maximum average returns on most tasks, notably HalfCheetah, Walker2d, Ant, and Humanoid. While our method didn't secure the highest returns on Hopper and Swimmer, the difference in returns between our method and the top-performing ones on these tasks is marginal. The results in Figure \ref{State-of-the-art Methods} and Table \ref{maximal average return comparison} strongly support our method's superior performance over state-of-the-art reinforcement learning methods (ACER, IMPALA, IPG, SAC, TD3, PPO, SLAC, and PGAFB) on most tasks.

	\begin{figure*}[htb]
		\centering
		\includegraphics[width=18cm,height=5.5cm]{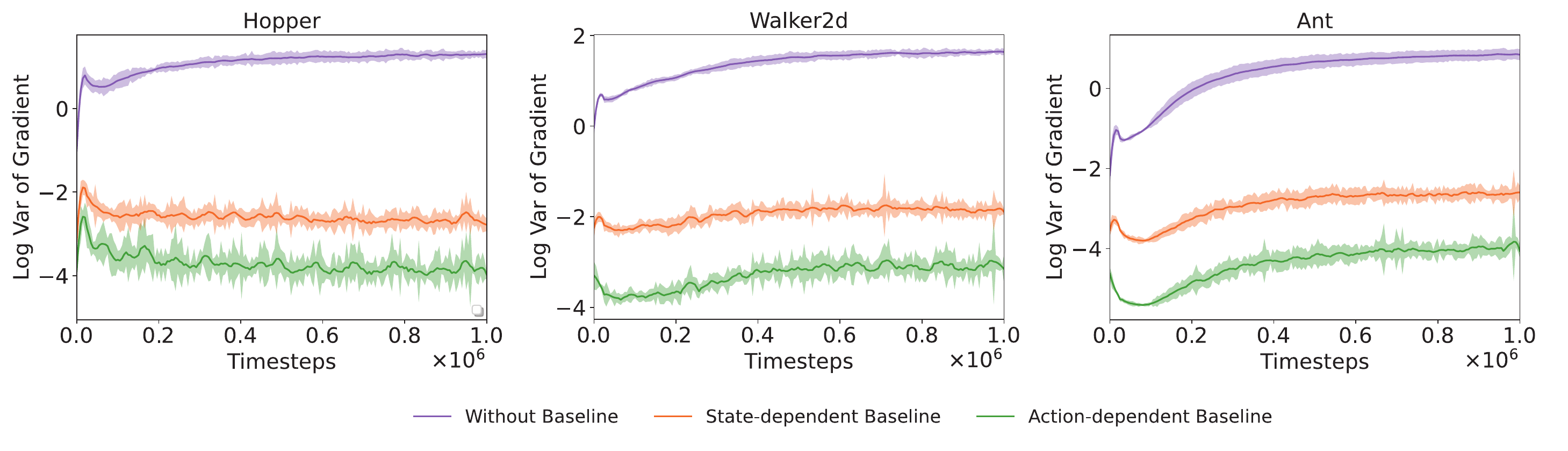}
		\vspace{-10pt}
		\caption{Results of our method with different baselines (without baseline, state-dependent baseline, action-dependent baseline) on Hopper, Walker2d, and Ant. The shaded region indicates the standard deviation over five random seeds. The $X$-aixs denotes the environment timesteps. The $Y$-axis denotes the logarithm of the gradient variance.}\label{variance_comparison}
		\label{two variants}
	\end{figure*}
	
	\begin{table*}[htb]
		\caption{Results of timesteps needed to achieve the return threshold on six representative continuous control tasks among these state-of-the-art methods. We indicate the best results in boldface in this table. $1000$ denotes that the method did not reach a threshold within $10^6$ timesteps.}
		\label{tasks_sample_threshold}
		\centering
		\begin{adjustbox}{width=1\textwidth}
			\begin{tabular}{l| c  c  c  c c c  c c c  || c}
				\hline
				\multicolumn{10}{c}{Timesteps to reach a threshold ($\times 10^3$)}\\
				\hline
				Timesteps& ACER & IMPALA & IPG & SAC & TD3 & PPO &  SLAC& PGAFB &Off-OAB & Return Threshold\\
				\hline
				HalfCheetah & $1000$  & $1000$ & $1000$ & $110$ & $70$  &  $120$ & $1000$ & $50$ & \bm{$50$}&  $5000$\\
				\hline
				Hopper &  $1000$ & $1000$ & $1000$ & $300$   & $160$ &  \bm{$80$} & $1000$ & $190$ & $180$ & $2000$ \\
				\hline
				Swimmer	& $1000$ & $1000$ & $800$  & $1000$  & $210$  &  \bm{$70$}  & $1000$  &  $90$&$170$ & $50$ \\
				\hline
				Walker2d &  $1000$ & $1000$  & $1000$  &  $680$  & $280$  & $750$  &  $1000$ & $120$ &\bm{$100$} & $3000$ \\
				\hline
				Ant & $1000$ & $1000$  &  $1000$ & $670$ & $450$ & $1000$ & $1000$ & $230$ &\bm{$100$}& $3000$ \\
				\hline    
				Humanoid & $1000$  & $1000$ & $1000$  & $870$  & $965$  & $1000$  & $1000$ & $1000$ & \bm{$420$} &  $5000$ \\
				\hline    
			\end{tabular}
		\end{adjustbox}
	\end{table*}

	\subsection{Comparison with Other Baselines}
	To assess the effectiveness of the proposed action-dependent baseline, we compare its overall performance with that of other baselines, namely the state-dependent baseline and the case without any baseline.

	Figure \ref{comparison_with_other_baselines} illustrates the comparison results. As depicted, the proposed Off-OAB method, when integrated with baselines (state-dependent and action-dependent baseline), achieves the same returns in fewer timesteps compared to the no-baseline case across various tasks. Furthermore, it attains higher final returns than the no-baseline approach on six tasks. Notably, Figure \ref{comparison_with_other_baselines} highlights a distinct performance gap between Off-OAB with the action-dependent baseline and the state-dependent baseline. The proposed action-dependent baseline enables Off-OAB to reach comparable returns more efficiently than the state-dependent baseline. Overall, the results in Figure \ref{comparison_with_other_baselines} demonstrate the proposed action-dependent baseline outperforms both the state-dependent baseline and the case without any baseline.

	\subsection{Study on Sample Efficiency}
	We investigate the sample efficiency of the proposed Off-OAB by comparing the timesteps needed to reach a certain return against other state-of-the-art methods. 
	
	Table \ref{tasks_sample_threshold} shows the timesteps needed to meet specific return thresholds during training, with thresholds set at $5000$, $2000$, $50$, $3000$, $3000$, and $5000$ for HalfCheetah, Hopper, Swimmer, Walker2d, Ant, and Humanoid, respectively. These thresholds were selected based on the overall training performance of the compared methods. Our method consistently required fewer timesteps to reach these thresholds, demonstrating superior sample efficiency. Specifically, for HalfCheetah, Walker2d, Ant, and Humanoid, it needed only $50 \times 10^3$, $100 \times 10^3$, $100 \times 10^3$, and $420 \times 10^3$ timesteps, respectively. The data from Table \ref{tasks_sample_threshold} indicate our method's enhanced sample efficiency over other state-of-the-art methods.

	\subsection{Study on Variance Reduction}
	We evaluate the variance reduction capability of our action-dependent baseline by contrasting the gradient variance of our baseline against that of other baselines, including the state-dependent baseline and the case without any baseline.

	To analyze the gradient variance, we use the logarithm of gradient variance, which correlates positively with variance and effectively illustrates its magnitude. This gradient variance is calculated from ten repeated gradient measurements under the same policy. The logarithm of gradient variance during the training process is shown in Figure \ref{variance_comparison}. Our method with an action-dependent baseline consistently exhibits a lower logarithm of gradient variance compared to both the state-dependent baseline and the case without any baseline in representative tasks (Hopper, Walker2d, and Ant). Notably, the variance gap between our action-dependent baseline and the case without any baseline is significant in these tasks, as is the gap between our baseline and the state-dependent baseline. These findings, depicted in Figure \ref{variance_comparison}, empirically demonstrate our action-dependent baseline's effectiveness in reducing the gradient estimator's variance for off-policy methods.

	\section{Conclusion}
	In this paper, we present an off-policy policy gradient method equipped with an optimal action-dependent baseline (Off-OAB) to reduce the variance of the policy gradient estimator. This method provides a novel perspective for tackling the high variance challenge in off-policy policy gradient methods. We start by introducing an action-dependent baseline that is theoretically unbiased for the off-policy policy gradient (OPPG) estimator. From this baseline, we derive its optimal form to minimize OPPG's variance effectively. For practical application, we approximate this optimal baseline by adopting a similar action-dependent baseline and demonstrate this baseline is close to the optimal one. With this simplified optimal action-dependent baseline, we propose an off-policy policy gradient method with this baseline (Off-OAB) and provide the detailed algorithm of the proposed Off-OAB. Extensive experiments on OpenAI Gym and MuJoCo tasks demonstrate our method's effectiveness over state-of-the-art methods in most tasks. Our experiments show our method's superior sample efficiency and provide empirical evidence supporting our action-dependent baseline as an effective variance reduction strategy.

	\vspace{-20pt}
	\appendix 
	
	\section{Appendix}

	\subsection{Derivation of Eq. (\ref{gradient estimator with multivariate gaussian})}\label{gradient_estimator_appendix}

	Recall the off-policy policy gradient estimator in Eq. (\ref{expectation formulation-01}), we know
	\begin{flalign}
		\mathcal{g}_{\text{off}}&= \mathbb{E}_{s\sim  d_{\mu}, a\sim \mu(a|s)}\left[\rho(s,a) \nabla_{\theta}\log\pi(a|s)  Q_{\pi}(s,a) \right] \nonumber \\
		&= \mathbb{E}_{s\sim  d_{\mu}, a\sim \mu(a|s)}\left[ \rho(s,a)  \nabla_{\theta}\log \left(\prod_{i=1}^{m} \pi(a^i |s)\right)  Q_{\pi}(s,a) \right]  \nonumber \\
		&=\mathbb{E}_{s\sim  d_{\mu}, a\sim  \mu(a|s)} \left[\sum_{i=1}^{m} \rho(s,a) \nabla_{\theta}\log\pi(a^i|s)  Q_{\pi}(s,a)   \right],
	\end{flalign}
	which concludes the result in Eq. (\ref{gradient estimator with multivariate gaussian}).

	\subsection{Unbiasedness of Action-dependent Baseline}\label{Unbiassness}
	
	\noindent\textbf{Remark \ref{remark_unbiasedness}} (Unbiasedness of Action-dependent Baseline)
	\emph{
		The action-dependent baseline in the proposed OPPG estimator, as shown in Eq. (\ref{non-biased estimator}), maintains the unbiased nature of the original OPPG estimator in Eq. (\ref{gradient estimator with multivariate gaussian}), supported by the fact that,
		\begin{flalign}
			\mathbb{E}_{a \sim \mu(a|s)}\left[ \rho(s, a)\nabla_{\theta}\log\pi(a^i |s) b_{i}(s, a ^{-i} ) \right]=0,
		\end{flalign}
		which implies the following fact
		\begin{flalign}
			\mathcal{g}_{\text{off}} =\mathcal{g}_{\text{off}}(b).
		\end{flalign}
	}
	\begin{proof}
		Recall the proposed action-dependent baseline $b_{i}(s, a^{-i})$ in Eq. (\ref{non-biased estimator}), we know,
		\begin{flalign} \label{app-eq-01}
			\begin{aligned}
				&\mathbb{E}_{a \sim \mu(a|s)}\left[ \frac{\pi(a|s) }{\mu(a|s)}\nabla_{\theta}\log\pi(a^i |s) b_{i}(s, a ^{-i} ) \right]    \\
				=&\int_{a\in A}\bcancel{\mu(a|s)}\dfrac{\pi(a|s) }{\bcancel{\mu(a|s)}}\nabla_{\theta}\log\pi(a^i |s) b_{i}(s, a ^{-i} ) \mathrm{d}a
				\\
				=&\int_{a\in A}\pi(a|s) \nabla_{\theta}\log\pi(a^i |s) b_{i}(s, a ^{-i} ) \mathrm{d}a
				\\
				=& \mathbb{E}_{a \sim \pi}\left[ \nabla_{\theta}\log\pi(a^i |s) b_i(s, a^{-i})  \right]  \\
				=& \mathbb{E}_{a^{-i} \sim \pi }\left[ \mathbb{E}_{a^i \sim \pi } \left[  \nabla_{\theta}\log\pi(a^i |s) b_i(s, a^{-i})   \right] \right] \\
				=& \mathbb{E}_{a^{-i} \sim \pi}\left[ \sum_{a^i} \pi(a^i |s) \frac{\nabla_{\theta}\pi(a^i |s)}{\pi(a^i |s)} b_i(s, a^{-i}) \right] \\
				=& \mathbb{E}_{a^{-i} \sim \pi }\left[ \nabla_{\theta} \left(\sum_{a^i}\pi(a^i|s) b_i(s, a^{-i} )  \right)  \right ] \\
				=& \mathbb{E}_{a^{-i} \sim \pi }\left[ \nabla_{\theta} \left( b_i(s, a^{-i} ) \right)  \right ] =0.  
			\end{aligned} 
		\end{flalign}
		Furthermore, recall OPPG estimator in Eq. (\ref{non-biased estimator}), we know
		\begin{flalign}
			\nonumber
			&\mathcal{g}_{\text{off}}(b)\\
			& =  \mathbb{E}_{s\sim d_{\mu}, a\sim \mu}\bigg[\sum_{i=1}^m  \frac{\pi(a|s)}{\mu(a|s)} \nabla_{\theta}\log\pi(a^i|s)  ( Q_{\pi}(s,a) \\
			& \qquad \qquad\qquad\qquad\qquad\qquad\qquad\qquad  - b_i(s, a^{-i}))\bigg]\\
			&\overset{(\ref{app-eq-01})}{=} \sum_{i=1}^m  \mathbb{E}_{s\sim d_{\mu}, a\sim \mu}\left[ \dfrac{\pi(a|s)}{\mu(a|s)} \nabla_{\theta}\log\pi(a^i|s)  Q_{\pi}(s,a) \right]  	 \\
			& -
			\sum_{i=1}^m  \mathbb{E}_{s\sim d_{\mu}}\bcancel{\mathbb{E}_{a\sim \mu}\left[ \dfrac{\pi(a|s)}{\mu(a|s)} \nabla_{\theta}\log\pi(a^i|s) b_i(s, a^{-i})\right]} \label{app-eq-02}
			\\
			& =\sum_{i=1}^m  \mathbb{E}_{s\sim d_{\mu}, a\sim \mu}\left[ \dfrac{\pi(a|s)}{\mu(a|s)} \nabla_{\theta}\log\pi(a^i|s)  Q_{\pi}(s,a) \right]=\mathcal{g}_{\text{off}}. \label{no_bias_proof}
		\end{flalign}
		The result in Eq. (\ref{no_bias_proof}) illustrates that the proposed action-dependent baseline $b_{i}(s, a^{-i})$ does not introduce bias into OPPG estimator in Eq. (\ref{gradient estimator with multivariate gaussian}).
	\end{proof}

	\subsection{Optimal Off-Policy Action-Dependent Baseline}\label{optimal_action_dependent_baseline_proof}

	\noindent\textbf{Theorem \ref{theorem-optimal-baseline}} (Optimal Off-Policy Action-Dependent Baseline)\textbf{.}
	\emph{
		Let $\mathcal{g}_{\text{off}}(b)$ be the off-policy policy gradient estimator defined in Eq. (\ref{non-biased estimator}). The optimal off-policy action-dependent baseline that minimizes the variance of $\mathcal{g}_{\text{off}}(b)$ is
		\begin{flalign}
			b^{\star}_i(s, a^{-i}) =\frac{\mathbb{E}_{a^i \sim \mu} \left[ \lVert \rho(s, a) \nabla_{\theta}\log\pi(a^i|s) \rVert^2 Q_{\pi}(s,a) \right]}{\mathbb{E}_{a^i \sim \mu} \left[ \lVert \rho(s, a) \nabla_{\theta}\log\pi(a^i|s) \rVert^2 \right]},
		\end{flalign}
		where $\rho(s, a)=\frac{\pi(a|s)}{\mu(a|s)}$.
	}
	\begin{proof}
		We expand the variance of $\mathcal{g}_{\text{off}}(b)$ in Eq. (\ref{non-biased estimator}) as follows,
		\begin{flalign}\label{variance formulation}
			\begin{aligned}
				& \mathrm{Var}\left[ \mathcal{g}_{\text{off}}(b)\right] = \mathrm{Var}\left[\sum_{i=1}^m \mathcal{g}_{\text{off}}^i(b) \right] \\
				&=\sum_{i=1}^m \mathrm{Var}( \mathcal{g}_{\text{off}}^i(b)) + \sum_{i=1}^{m} \sum_{j \neq i}\mathrm{Cov}\big( \mathcal{g}_{\text{off}}^i(b), \mathcal{g}_{\text{off}}^j(b) \big )  \\
				& =\sum_{i=1}^{m} \mathrm{Var}( \mathcal{g}_{\text{off}}^i(b)) + \sum_{i=1}^m \sum_{j \neq i }\Big( \mathbb{E}_{s\sim  d_{\mu}, a\sim \mu}\big[ \mathcal{g}_{\text{off}}^i(b)^{\top} \mathcal{g}_{\text{off}}^j(b)\big] \\
				& \qquad\qquad -  \mathbb{E}_{s\sim  d_{\mu}, a \sim \mu}\big[  \mathcal{g}_{\text{off}}^i(b) \big]^{\top} \mathbb{E}_{s \sim  d_{\mu}, a\sim \mu}\big[ \mathcal{g}_{\text{off}}^j(b) \big]  \Big ).
			\end{aligned}
		\end{flalign}
		Recall the following fact,
		\begin{flalign} \label{assumption}
			\nabla_{\theta}\log\pi(a^i |s)^{\top} \nabla_{\theta} \log \pi(a^j |s) \approx 0, \forall i \neq j.
		\end{flalign} 
		The assumption in Eq.(\ref{assumption}) means that different subsets of parameters strongly influence different action dimensions of factors, which is commonly used. Under such assumption, $\mathbb{E}_{s\sim  d_{\mu}, a\sim \mu}[ \mathcal{g}_{\text{off}}^i(b)^{\top} \mathcal{g}_{\text{off}}^j(b)]$ in Eq.(\ref{variance formulation}) satisfies: 
		\begin{flalign}\label{zero expectation}
			\mathbb{E}_{s\sim  d_{\mu}, a\sim \mu}[ \mathcal{g}_{\text{off}}^i(b)^{\top} \mathcal{g}_{\text{off}}^j(b)]=0. 
		\end{flalign}

		Furthermore, we define some addition notations as follows:
		\begin{flalign}
			& \rho(s, a)=\frac{\pi(a|s)}{\mu(a|s)}, \\
			&z_i := \rho(s, a) \nabla_{\theta}\log\pi(a^i|s),~~~z_j :=\rho(s, a) \nabla_{\theta}\log\pi(a^j|s),\\
			&M_{i,j} := \mathbb{E}_{s\sim  d_{\mu}, a \sim \mu}[z_i Q_{\pi}(s,a) ]^{\top} \mathbb{E}_{s \sim  d_{\mu}, a\sim \mu}[z_j Q_{\pi}(s,a)].
		\end{flalign}
		By substituting Eq.(\ref{zero expectation}) into Eq.(\ref{variance formulation}), the variance $ \mathrm{Var}[ g_b(\theta)]$ in Eq.(\ref{variance formulation}) can be rewritten as: 
		\begin{small}
			\begin{flalign}
				&\mathrm{Var}\left[ \mathcal{g}_{\text{off}}(b)\right]  \\
				&=\sum_{i=1}^{m} \mathrm{Var}( \mathcal{g}_{\text{off}}^i(b))   -  \sum_{i=1}^m \sum_{j \neq i } \mathbb{E}_{s\sim  d_{\mu}, a \sim \mu}\big[  \mathcal{g}_{\text{off}}^i(b) \big]^{\top} \mathbb{E}_{s \sim  d_{\mu}, a\sim \mu}\big[ \mathcal{g}_{\text{off}}^j(b) \big]  \nonumber\\
				&= \sum_{i=1}^{m} \mathrm{Var}( \mathcal{g}_{\text{off}}^i(b)) - \sum_{i=1}^m \sum_{j \neq i } M_{i, j} ,\label{final variance} 
			\end{flalign}
		\end{small}
		where the last equation holds due to the unbiasedness of the action-dependent baseline in Remark \ref{remark_unbiasedness}, i.e.,
		\begin{flalign}
			\mathbb{E}_{a \sim \mu(a|s)}\left[ \rho(s,a) \nabla_{\theta}\log\pi(a^i |s) b_{i}(s, a ^{-i} ) \right]=0.
		\end{flalign}

		To derive the optimal action-dependent baseline, our goal is to minimize the variance of the OPPG estimator in Eq. (\ref{final variance}). Notably, the second term of Eq. (\ref{final variance}) is independent of the action-dependent baseline $b_{i}(s, a^{-i})$. Therefore, the optimization focuses on minimizing the first term of Eq. (\ref{final variance}), simplifying our problem to the following optimization problem:
		\begin{flalign}
			\min_{\{g_{i}\}_{i=1,\cdots,m}}\sum_{i=1}^{m} \mathrm{Var}( \mathcal{g}_{\text{off}}^i(b)).
		\end{flalign}

		To minimize the summation in the first term, each component ($\mathrm{Var}( \mathcal{g}_{\text{off}}^i(b))$) in this summation needs to be minimized.

		For a baseline $b_{i}(s,a^{-i})$, we have
		\begin{small}
			\begin{flalign}\label{separate variance}
				&\mathrm{Var}\left[\mathcal{g}_{\text{off}}^i(b)\right] \\
				& = \mathrm{Var}\left[(\rho(s, a) \nabla_{\theta}\log\pi(a^i|s) )(Q_{\pi}(s,a)  - b_{i}(s,a^{-i})) \right]\\
				&=\mathbb{E}_{s\sim  d_{\mu}, a\sim \mu}\left[ \lVert \rho(s, a) \nabla_{\theta}\log\pi(a^i|s) \rVert^2 (Q_{\pi}(s,a) - b_{i}(s,a^{-i}))^2\right]   \\
				& -  \Big( \mathbb{E}_{s\sim  d_{\mu}, a\sim \mu}[(\rho(s, a) \nabla_{\theta}\log\pi(a^i|s) )(Q_{\pi}(s,a)  - b_{i}(s,a^{-i}))] \Big ) ^2  \\
				&\overset{(\ref{action_dependent_equation})}= \mathbb{E}_{ s\sim  d_{\mu}, a\sim \mu}[\lVert \rho(s, a) \nabla_{\theta}\log\pi(a^i|s) \rVert^2 (Q_{\pi}(s,a))^2 ]\\
				& \qquad - 2 \mathbb{E}_{s\sim  d_{\mu}, a\sim \mu}[ \lVert \rho(s, a) \nabla_{\theta}\log\pi(a^i|s) \rVert^2 b_i(s,a^{-i})Q_{\pi}(s,a) ] \\
				& \qquad + \mathbb{E}_{s\sim  d_{\mu}, a\sim \mu}[\lVert \rho(s, a) \nabla_{\theta}\log\pi(a^i|s) \rVert^2  b_i(s,a^{-i})^2] \\
				& \qquad -  \Big(  \mathbb{E}_{s\sim  d_{\mu}, a\sim \mu}[(\rho(s, a) \nabla_{\theta}\log\pi(a^i|s) ) Q_{\pi}(s,a)] \Big)^2 \\
				&= \mathbb{E}_{s\sim  d_{\mu}, a^{-i}\sim \mu} \Big[   b_i(s,a^{-i})^2 \mathbb{E}_{a^i \sim \mu}[\lVert \rho(s, a) \nabla_{\theta}\log\pi(a^i|s) \rVert^2  ]  \\
				&\qquad - 2 b_i(s,a^{-i}) \mathbb{E}_{a^i \sim \mu}[ \lVert \rho(s, a) \nabla_{\theta}\log\pi(a^i|s) \rVert^2 Q_{\pi}(s,a) ] \Big] \\
				&\qquad + \mathbb{E}_{ s\sim  d_{\mu}, a\sim \mu}\left[\lVert \rho(s, a) \nabla_{\theta}\log\pi(a^i|s) \rVert^2 (Q_{\pi}(s,a))^2 \right]  \label{Q^2}  \\
				& \qquad -  \Big(  \mathbb{E}_{s\sim  d_{\mu}, a\sim \mu}[(\rho(s, a) \nabla_{\theta}\log\pi(a^i|s) ) Q_{\pi}(s,a)] \Big)^2  \label{total^2}.
			\end{flalign}
		\end{small}
		
		The final terms in Eq. (\ref{Q^2}) and (\ref{total^2}) are independent of the baseline $b_i(s,a^{-i})$, which can be treated as constants. Therefore, we can simplify the variance $\mathrm{Var}[ \mathcal{g}_{\text{off}}^i(b)]$  as follows,
		\begin{flalign}
			\begin{aligned}\label{final_variance_estimation}
				&\mathrm{Var}\left[ \mathcal{g}_{\text{off}}^i(b) \right]\\
				&= \mathbb{E}_{s\sim  d_{\mu}, a^{-i}\sim \mu} \Big[   b_i(s,a^{-i})^2 \mathbb{E}_{a^i \sim \mu}[\lVert \rho(s, a) \nabla_{\theta}\log\pi(a^i|s) \rVert^2  ]  \\
				&\qquad - 2 b_i(s,a^{-i}) \mathbb{E}_{a^i \sim \mu}[ \lVert \rho(s, a) \nabla_{\theta}\log\pi(a^i|s) \rVert^2 Q_{\pi}(s,a) ] \Big] \\
				& \qquad\qquad+ constant,
			\end{aligned}
		\end{flalign}
		where
		\begin{flalign}
			\begin{aligned}
				&constant =\\
				&\mathbb{E}_{s\sim  d_{\mu}, a\sim \mu}[ \lVert \rho(s, a) \nabla_{\theta}\log\pi(a^i|s) \rVert^2 (Q_{\pi}(s,a))^2 ] \\
				& -  \Big(  \mathbb{E}_{s\sim  d_{\mu}, a\sim \mu}[(\rho(s, a) \nabla_{\theta}\log\pi(a^i|s) ) Q_{\pi}(s,a)] \Big)^2.  
			\end{aligned}
		\end{flalign}
		
		The optimal action-dependent baseline is found by minimizing the variance in (\ref{final_variance_estimation}). 
		The optimal baseline is derived by
		\begin{flalign}
			\nabla_{b_i}\mathrm{Var}[ \mathcal{g}_{\text{off}}^i(b)]=0,
		\end{flalign}
		which concludes
		\begin{flalign}\label{optimal action-dependent baseline}
			b^{\star}_i(s, a^{-i}) =\frac{\mathbb{E}_{a^i \sim \mu} \left[\lVert \rho(s, a) \nabla_{\theta}\log\pi(a^i|s) \rVert^2 Q_{\pi}(s,a) \right]}{\mathbb{E}_{a^i \sim \mu} \left[\lVert \rho(s, a) \nabla_{\theta}\log\pi(a^i|s) \rVert^2 \right]}.
		\end{flalign}
		The notation $b^{\star}_i(s, a^{-i})$  in Eq. (\ref{optimal action-dependent baseline})  represents the derived optimal action-dependent baseline for $i$-th action dimension. 
	\end{proof}
	
	\subsection{Variance Difference: $b_i(s, a^{-i})$ vs.  $b^{\star}_i(s, a^{-i})$ }\label{excess_variance_proof}

	\begin{lemma}\label{baseline_excess_variance_theorem}
		The variance difference of OPPG estimator between $b_i(s, a^{-i})$ and $b^{\star}_i(s, a^{-i})$ in the $i$-th action dimension can be defined as,
		\begin{flalign}
			\begin{aligned}
				&\Delta \mathrm{Var}(\mathcal{g}_{\text{off}}^i(b_i(s, a^{-i})) )  \\
				&\triangleq  \mathrm{Var}\left[ \mathcal{g}_{\text{off}}^i(b_i(s, a^{-i})) \right]- \mathrm{Var} \left[ \mathcal{g}_{\text{off}}^i(b^{\star}_i(s, a^{-i})) \right], 
			\end{aligned}
		\end{flalign} 
		and it satisfies
		\begin{flalign}\label{Delta_var_bi}
			\begin{aligned}
				&\Delta \mathrm{Var}(\mathcal{g}_{\text{off}}^i(b_i(s, a^{-i})) )  \\
				&=\mathbb{E}_{s\sim  d_{\mu}, a^{-i}\sim \mu} \Big[ \Big( b_i(s, a^{-i}) - b^{\star}_i(s, a^{-i}) \Big)^2 \\
				&\qquad\qquad\qquad\qquad\mathbb{E}_{a^i \sim \mu}\left[ \lVert \rho(s, a) \nabla_{\theta}\log\pi(a^i|s) \rVert^2 \right]\Big],
			\end{aligned}
		\end{flalign}
		where $b_i(s, a^{-i})$ represents an action-dependent baseline or a state-dependent baseline. 
	\end{lemma}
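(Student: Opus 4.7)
The plan is to build directly on the algebraic work already done in the proof of Theorem~\ref{theorem-optimal-baseline}. There, the per-dimension variance $\mathrm{Var}[\mathcal{g}_{\text{off}}^i(b)]$ was shown (see Eq.~(\ref{final_variance_estimation})) to take the form
\begin{flalign*}
\mathrm{Var}[\mathcal{g}_{\text{off}}^i(b)] = \mathbb{E}_{s\sim d_\mu,\, a^{-i}\sim \mu}\!\left[ b(s,a^{-i})^2\, E_1(s,a^{-i}) - 2 b(s,a^{-i})\, E_2(s,a^{-i}) \right] + C,
\end{flalign*}
where I abbreviate $E_1(s,a^{-i}) := \mathbb{E}_{a^i\sim \mu}[\lVert \rho(s,a)\nabla_\theta\log\pi(a^i|s)\rVert^2]$, $E_2(s,a^{-i}) := \mathbb{E}_{a^i\sim \mu}[\lVert \rho(s,a)\nabla_\theta\log\pi(a^i|s)\rVert^2 Q_\pi(s,a)]$, and $C$ is a constant independent of the baseline. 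This is my starting point.

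Next I would instantiate this formula at the two baselines of interest, $b = b_i(s,a^{-i})$ and $b = b^{\star}_i(s,a^{-i})$. The constant $C$ cancels on subtraction, so
\begin{flalign*}
\Delta \mathrm{Var}(\mathcal{g}_{\text{off}}^i(b_i)) = \mathbb{E}_{s,a^{-i}}\!\left[ E_1\bigl(b_i^2 - (b^{\star}_i)^2\bigr) - 2 E_2\bigl(b_i - b^{\star}_i\bigr) \right].
\end{flalign*}
The key substitution is then the defining identity of the optimal baseline from Theorem~\ref{theorem-optimal-baseline}, namely $E_2 = b^{\star}_i \cdot E_1$. Plugging this in and collecting terms inside the expectation gives $E_1\bigl(b_i^2 - 2 b_i b^{\star}_i + (b^{\star}_i)^2\bigr) = E_1 (b_i - b^{\star}_i)^2$, which is exactly Eq.~(\ref{Delta_var_bi}).

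There is essentially no hard step: the derivation is a completion-of-the-square argument once the variance is written in the quadratic-in-$b$ form from Eq.~(\ref{final_variance_estimation}). The only thing to be careful about is conceptual rather than technical: the lemma is stated for an \emph{arbitrary} action-dependent or state-dependent baseline $b_i(s,a^{-i})$, so I would note explicitly that a state-dependent baseline $b(s)$ is a special case (constant in $a^{-i}$), and that the quadratic-in-$b$ expansion in Eq.~(\ref{final_variance_estimation}) continues to hold because it only used the unbiasedness identity from Remark~\ref{remark_unbiasedness}, which holds for any $b$ depending on $s$ and $a^{-i}$ (including $b$ that depends only on $s$). With that observation in place, the same chain of equalities applies verbatim and yields the claimed identity.
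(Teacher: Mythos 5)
Your proposal is correct and follows essentially the same route as the paper: both start from the quadratic-in-$b$ form of the per-dimension variance in Eq.~(\ref{final_variance_estimation}), use the defining identity $\mathbb{E}_{a^i\sim\mu}[\lVert \rho\nabla_\theta\log\pi(a^i|s)\rVert^2 Q_\pi] = b^{\star}_i\,\mathbb{E}_{a^i\sim\mu}[\lVert \rho\nabla_\theta\log\pi(a^i|s)\rVert^2]$ from Theorem~\ref{theorem-optimal-baseline}, and complete the square after the constant cancels. Your explicit remark that a state-dependent baseline is just the special case constant in $a^{-i}$ is a small but worthwhile addition the paper leaves implicit.
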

	\begin{proof}
		Recall the formulation of variance in Eq. (\ref{final_variance_estimation}), we have 
		\begin{flalign}\label{var_state_baseline}
			\begin{aligned}
				&\mathrm{Var}\left[ \mathcal{g}_{\text{off}}^i(b_i(s, a^{-i})) \right]\\
				&= \mathbb{E}_{s\sim  d_{\mu}, a^{-i}\sim \mu} \Big[   b_i(s,a^{-i})^2 \mathbb{E}_{a^i \sim \mu}[\lVert \rho(s, a) \nabla_{\theta}\log\pi(a^i|s) \rVert^2  ]  \\
				&\qquad - 2 b_i(s,a^{-i}) \mathbb{E}_{a^i \sim \mu}[ \lVert \rho(s, a) \nabla_{\theta}\log\pi(a^i|s) \rVert^2 Q_{\pi}(s,a) ] \Big] \\
				& \qquad+ constant.
			\end{aligned}
		\end{flalign}
		Let the baseline be the optimal action-dependent baseline ($b^{\star}_i(s, a^{-i}) $ ), the OPPG variance in Eq. (\ref{final_variance_estimation}) becomes:
		\begin{flalign}\label{optimal_baseline_variance}
			\begin{aligned}
				&\mathrm{Var}_{a^i \sim \mu}[ \mathcal{g}_{\text{off}}^i(b^{\star}_i(s, a^{-i}))]\\
				& = \mathbb{E}_{s\sim  d_{\mu}, a^{-i}\sim \mu} \Big[  b^{\star}_i(s,a^{-i})^2 \mathbb{E}_{a^i \sim \mu}[\lVert \rho(s, a) \nabla_{\theta}\log\pi(a^i|s) \rVert^2  ]  \\
				& \quad- 2 b^{\star}_i(s,a^{-i}) \mathbb{E}_{a^i \sim \mu}[ \lVert \rho(s, a) \nabla_{\theta}\log\pi(a^i|s) \rVert^2 Q_{\pi}(s,a) ] \Big] \\
				& \qquad + constant\\
				& \overset{(\ref{them-baseline-01})}{=}	\mathbb{E}_{s\sim  d_{\mu}, a^{-i}\sim \mu} \Big[  b^{\star}_i(s,a^{-i})^2 \mathbb{E}_{a^i \sim \mu}[\lVert \rho(s, a) \nabla_{\theta}\log\pi(a^i|s) \rVert^2  ]  \\
				& \quad- 2 b^{\star}_i(s,a^{-i})^2 \mathbb{E}_{a^i \sim \mu}[\lVert \rho(s, a) \nabla_{\theta}\log\pi(a^i|s) \rVert^2  ] \Big] \\
				& \qquad+ constant\\
				&= \mathbb{E}_{s\sim  d_{\mu}, a^{-i}\sim \mu}\left[- b^{\star}_i(s,a^{-i})^2 \mathbb{E}_{a^i \sim \mu}[\lVert \rho(s, a) \nabla_{\theta}\log\pi(a^i|s) \rVert^2  ] \right]\\
				&\qquad +constant.
			\end{aligned}
		\end{flalign}
		According to Eq. (\ref{var_state_baseline}) and Eq. (\ref{optimal_baseline_variance}), we have 
		\begin{flalign}
			\small
			\begin{aligned}
				&\Delta \mathrm{Var}(\mathcal{g}_{\text{off}}^i(b_i(s, a^{-i})) )\\
				&=\mathrm{Var}\left[ \mathcal{g}_{\text{off}}^i(b_i(s, a^{-i}) ) \right]- \mathrm{Var} \left[ \mathcal{g}_{\text{off}}^i(b^{\star}_i(s,a^{-i})) \right] \\
				&= \mathbb{E}_{s\sim  d_{\mu}, a^{-i}\sim \mu} \Big[ b_i(s, a^{-i})^2 \mathbb{E}_{a^i \sim \mu}[\lVert \rho(s, a) \nabla_{\theta}\log\pi(a^i|s) \rVert^2  ]  \\
				& \quad- 2 b_i(s, a^{-i}) b^{\star}_i(s,a^{-i}) \mathbb{E}_{a^i \sim \mu}[\lVert \rho(s, a) \nabla_{\theta}\log\pi(a^i|s) \rVert^2  ] \\
				&\quad + b^{\star}_i(s,a^{-i})^2 \mathbb{E}_{a^i \sim \mu}[\lVert \rho(s, a) \nabla_{\theta}\log\pi(a^i|s) \rVert^2  ]\Big] \\
				&=\mathbb{E}_{s\sim  d_{\mu}, a^{-i}\sim \mu} \Big[ \Big( b_i(s, a^{-i}) - b^{\star}_i(s, a^{-i}) \Big)^2 \\
				&\qquad\qquad\qquad\qquad\qquad\quad\mathbb{E}_{a^i \sim \mu}\left[ \lVert \rho(s, a) \nabla_{\theta}\log\pi(a^i|s) \rVert^2 \right]\Big].
			\end{aligned}
		\end{flalign}
		
	\end{proof}
	
	\subsection{Variance Difference between Optimal State and Action-Dependent Baseline}\label{optimal_state_dependent_excess_variance}
	\noindent\textbf{Theorem \ref{excess_variance_theorem}} (Variance Difference between Optimal State and Action-Dependent Baseline)\textbf{.}
	\emph{		Let the variance difference $\Delta \mathrm{Var}(\mathcal{g}_{\text{off}}^i(b^{\star}(s))) $ between optimal state and action-dependent baseline for OPPG estimator be defined in Eq. (\ref{variance_diff_definition}). This variance difference satisfies,
		\begin{flalign}
			\begin{aligned}
				& \Delta \mathrm{Var}(\mathcal{g}_{\text{off}}^i(b^{\star}(s)))\\
				&= \mathbb{E}_{s\sim  d_{\mu}, a^{-i}\sim \mu}\Big[ \Big( b^{\star}(s)- b^{\star}_i(s, a^{-i}) \Big)^2 \\
				&\qquad\qquad\qquad\qquad \mathbb{E}_{a^i \sim \mu}\left[ \lVert \rho(s, a) \nabla_{\theta}\log\pi(a^i|s) \rVert^2 \right]\Big].
			\end{aligned}
		\end{flalign}
	}
	\begin{proof}
		When $b_i(s, a^{-i})$ is the optimal state-dependent baseline $b^{\star}(s)$, referencing notations in Appendix \ref{excess_variance_proof} and the formulation in Eq. (\ref{Delta_var_bi}) from Lemma \ref{baseline_excess_variance_theorem}, we have:
		\begin{flalign}
			\begin{aligned}
				& \Delta \mathrm{Var}(\mathcal{g}_{\text{off}}^i(b^{\star}(s)))\\
				&=\mathrm{Var}_{a^i \sim \mu}\left[ \mathcal{g}_{\text{off}}^i(b^{\star}(s)) \right]- \mathrm{Var}_{a^i \sim \mu} \left[ \mathcal{g}_{\text{off}}^i(b^{\star}_i(s, a^{-i})) \right]  \\
				&= \mathbb{E}_{s\sim  d_{\mu}, a^{-i}\sim \mu}\Big[ \Big( b^{\star}(s)- b^{\star}_i(s, a^{-i}) \Big)^2 \\
				&\qquad\qquad\qquad\quad \mathbb{E}_{a^i \sim \mu}\left[ \lVert \rho(s, a) \nabla_{\theta}\log\pi(a^i|s) \rVert^2 \right]\Big].
			\end{aligned}
		\end{flalign}
	\end{proof}
	
	\subsection{Close to Optimal Action-Dependent Baseline}\label{excess_variance_Q_pi_sa}
	\noindent\textbf{Theorem \ref{theorem_Q_pi_sa}} (Close to Optimal Action-Dependent Baseline)\textbf{.}
	\emph{Define the approximated baseline as $\mathbb{E}_{a^i \sim \mu}[Q_{\pi}(s,a)]$ in Eq. (\ref{appro_b_s_a}). 
		The variance of this approximated baseline is close to that of the optimal action-dependent baseline when the policy factor is weakly correlated with the action value,
		\begin{flalign}	
	\begin{aligned}
		&\Delta \mathrm{Var}( \mathcal{g}_{\text{off}}^i(\mathbb{E}_{a^i \sim \mu}[Q_{\pi}(s,a)])) \\
		&= \mathrm{Var}\left[ \mathcal{g}_{\text{off}}^i(\mathbb{E}_{a^i \sim \mu}[Q_{\pi}(s,a)]) \right]- \mathrm{Var} \left[ \mathcal{g}_{\text{off}}^i(b^{\star}_i(s, a^{-i})) \right] 
		\approx 0, \\
		& \text{when},\\
		& \quad \mathbb{E}_{a^i \sim \mu} \left[ \lVert \rho(s, a) \nabla_{\theta}\log\pi(a^i|s) \rVert^2 Q_{\pi}(s,a) \right] \\
		&\quad \approx \mathbb{E}_{a^i \sim \mu} \left[ \lVert \rho(s, a) \nabla_{\theta}\log\pi(a^i|s) \rVert^2  \right] \mathbb{E}_{a^i \sim \mu}[Q_{\pi}(s,a)].
	\end{aligned}
\end{flalign}
	}
	\begin{proof}
		As outlined in Eq. (\ref{Delta_var_bi}) from Lemma \ref{baseline_excess_variance_theorem} in Appendix \ref{excess_variance_proof}, when $b_i(s, a^{-i})$ is defined as $\mathbb{E}_{a^i \sim \mu}[Q_{\pi}(s,a)]$, the variance difference between $\mathbb{E}_{a^i \sim \mu}[Q_{\pi}(s,a)]$ and the optimal baseline $b^{*}_i(s, a^{-i}) $in the $i$-th action dimension is expressed as:
		\small
		\begin{flalign}
			\small
			\begin{aligned}\label{excess_variance_E_Q_eq}
				&\Delta \mathrm{Var}( \mathcal{g}_{\text{off}}^i(\mathbb{E}_{a^i \sim \mu}[Q_{\pi}(s,a)])) \\
				&= \mathbb{E}_{s\sim  d_{\mu}, a^{-i}\sim \mu}\Big[ \\
				&\Big( \mathbb{E}_{a^i \sim \mu}[Q_{\pi}(s,a)]- b^{\star}_i(s, a^{-i}) \Big)^2 \mathbb{E}_{a^i \sim \mu}\left[ \lVert \rho(s, a) \nabla_{\theta}\log\pi(a^i|s) \rVert^2 \right] \Big]\\
				&\overset{(\ref{them-baseline-01})}{=}\mathbb{E}_{s\sim  d_{\mu}, a^{-i}\sim \mu}\Big[\\
				&\Big( \mathbb{E}_{a^i \sim \mu}[Q_{\pi}(s,a)] - \frac{\mathbb{E}_{a^i \sim \mu} \left[ \lVert \rho(s, a) \nabla_{\theta}\log\pi(a^i|s) \rVert^2 Q_{\pi}(s,a) \right]}{\mathbb{E}_{a^i \sim \mu} \left[ \lVert \rho(s, a) \nabla_{\theta}\log\pi(a^i|s) \rVert^2 \right]}\Big)^2 \\
				& \qquad\qquad\qquad\qquad\qquad\qquad\qquad \mathbb{E}_{a^i \sim \mu}\left[ \lVert \rho(s, a) \nabla_{\theta}\log\pi(a^i|s) \rVert^2 \right] \Big].
			\end{aligned}
		\end{flalign}
		When 
		\begin{flalign}
			\begin{aligned}
				& \mathbb{E}_{a^i \sim \mu} \left[ \lVert \rho(s, a) \nabla_{\theta}\log\pi(a^i|s) \rVert^2 Q_{\pi}(s,a) \right] \\
				&\approx \mathbb{E}_{a^i \sim \mu} \left[ \lVert \rho(s, a) \nabla_{\theta}\log\pi(a^i|s) \rVert^2  \right] \mathbb{E}_{a^i \sim \mu}[Q_{\pi}(s,a)],
			\end{aligned}
		\end{flalign}
		the variance difference in Eq. (\ref{excess_variance_E_Q_eq}) becomes:
		\begin{flalign}
			\begin{aligned}
				&\Delta \mathrm{Var}( \mathcal{g}_{\text{off}}^i(\mathbb{E}_{a^i \sim \mu}[Q_{\pi}(s,a)]))\\
				&\approx \mathbb{E}_{s\sim  d_{\mu}, a^{-i}\sim \mu}\Big[\Big( \mathbb{E}_{a^i \sim \mu}[Q_{\pi}(s,a)] - \mathbb{E}_{a^i \sim \mu}[Q_{\pi}(s,a)] \Big)^2  \\
				&\qquad\qquad\qquad\qquad\qquad\mathbb{E}_{a^i \sim \mu}\left[ \lVert \rho(s, a) \nabla_{\theta}\log\pi(a^i|s) \rVert^2 \right] \Big]\\
				&=0.
			\end{aligned}
		\end{flalign}
	\end{proof}
	
	\subsection{Experimental Details}\label{detailed_hy}
	For implementing benchmark methods (ACER, IMPALA, IPG, SAC, TD3, PPO, SLAC, and PGAFB) in our experiments, we utilize the authors' code or their hyperparameters from their respective publications. Our method's hyperparameters are detailed in Table \ref{hyperparameter_table}.
	\begin{table}[htb]
		\centering
		\caption{Hyperparameters used in our method.}\label{hyperparameter_table}
		\begin{tabular}{l |l}
			Parameter & Value\\
			\hline
			Discount ($\gamma$) &  $0.99$ \\ \hline
			Critic learning rate ($\lambda_{Q}$) & $0.0003$ \\ \hline
			Actor learning rate ($\lambda_{\pi}$) &$0.0003$ \\ \hline
			decay rate ($\tau$) & $0.004$ \\ \hline
			Replay buffer size & $10^6$ \\ \hline
			Batch size &  $256$ \\ \hline
			Number of hidden layers &  $2$ \\ \hline
			Number of hidden units per layer & $256$ \\ \hline
			Nonlinearity & ReLU  \\ \hline
			Target update interval & $250$ \\ \hline
			Timesteps before training & $25\times 10^3$ \\ \hline
			Episode number during evaluation & $10$ \\ \hline
			Timesteps of evaluation frequency & $10^4$ \\ \hline
			Maximal timesteps & $10^6$ 
		\end{tabular}
	\end{table}

\vspace{-15pt}

	\bibliographystyle{IEEEtran}
	\bibliography{reference}

\begin{thebibliography}{10}
\providecommand{\url}[1]{#1}
\csname url@samestyle\endcsname
\providecommand{\newblock}{\relax}
\providecommand{\bibinfo}[2]{#2}
\providecommand{\BIBentrySTDinterwordspacing}{\spaceskip=0pt\relax}
\providecommand{\BIBentryALTinterwordstretchfactor}{4}
\providecommand{\BIBentryALTinterwordspacing}{\spaceskip=\fontdimen2\font plus
\BIBentryALTinterwordstretchfactor\fontdimen3\font minus
  \fontdimen4\font\relax}
\providecommand{\BIBforeignlanguage}[2]{{%
\expandafter\ifx\csname l@#1\endcsname\relax
\typeout{** WARNING: IEEEtran.bst: No hyphenation pattern has been}%
\typeout{** loaded for the language `#1'. Using the pattern for}%
\typeout{** the default language instead.}%
\else
\language=\csname l@#1\endcsname
\fi
#2}}
\providecommand{\BIBdecl}{\relax}
\BIBdecl

\bibitem{mnih2015human}
V.~Mnih, K.~Kavukcuoglu, D.~Silver, A.~A. Rusu, J.~Veness, M.~G. Bellemare,
  A.~Graves, M.~Riedmiller, A.~K. Fidjeland, G.~Ostrovski \emph{et~al.},
  ``Human-level control through deep reinforcement learning,'' \emph{Nature},
  vol. 518, no. 7540, pp. 529--533, 2015.

\bibitem{silver2017mastering}
D.~Silver, J.~Schrittwieser, K.~Simonyan, I.~Antonoglou, A.~Huang, A.~Guez,
  T.~Hubert, L.~Baker, M.~Lai, A.~Bolton \emph{et~al.}, ``Mastering the game of
  go without human knowledge,'' \emph{Nature}, vol. 550, no. 7676, pp.
  354--359, 2017.

\bibitem{papini2018stochastic}
M.~Papini, D.~Binaghi, G.~Canonaco, M.~Pirotta, and M.~Restelli, ``Stochastic
  variance-reduced policy gradient,'' in \emph{International Conference on
  Machine Learning}, 2018, pp. 4026--4035.

\bibitem{badia2020agent57}
A.~P. Badia, B.~Piot, S.~Kapturowski, P.~Sprechmann, A.~Vitvitskyi, Z.~D. Guo,
  and C.~Blundell, ``Agent57: Outperforming the atari human benchmark,'' in
  \emph{International Conference on Machine Learning}, 2020, pp. 507--517.

\bibitem{schrittwieser2020mastering}
J.~Schrittwieser, I.~Antonoglou, T.~Hubert, K.~Simonyan, L.~Sifre, S.~Schmitt,
  A.~Guez, E.~Lockhart, D.~Hassabis, T.~Graepel \emph{et~al.}, ``Mastering
  atari, go, chess and shogi by planning with a learned model,'' \emph{Nature},
  vol. 588, no. 7839, pp. 604--609, 2020.

\bibitem{nikishin2024deep}
E.~Nikishin, J.~Oh, G.~Ostrovski, C.~Lyle, R.~Pascanu, W.~Dabney, and
  A.~Barreto, ``Deep reinforcement learning with plasticity injection,'' in
  \emph{Advances in Neural Information Processing Systems}, vol.~36, 2024.

\bibitem{levine2016end}
S.~Levine, C.~Finn, T.~Darrell, and P.~Abbeel, ``End-to-end training of deep
  visuomotor policies,'' \emph{The Journal of Machine Learning Research},
  vol.~17, no.~1, pp. 1334--1373, 2016.

\bibitem{haarnoja2018soft}
T.~Haarnoja, A.~Zhou, P.~Abbeel, and S.~Levine, ``Soft actor-critic: Off-policy
  maximum entropy deep reinforcement learning with a stochastic actor,'' in
  \emph{International Conference on Machine Learning}, 2018, pp. 1861--1870.

\bibitem{hu2023graph}
Y.~Hu, J.~Fu, and G.~Wen, ``Graph soft actor--critic reinforcement learning for
  large-scale distributed multirobot coordination,'' \emph{IEEE Transactions on
  Neural Networks and Learning Systems}, 2023.

\bibitem{schulman2015trust}
J.~Schulman, S.~Levine, P.~Abbeel, M.~Jordan, and P.~Moritz, ``Trust region
  policy optimization,'' in \emph{International Conference on Machine
  Learning}, 2015, pp. 1889--1897.

\bibitem{schmitt2020off}
S.~Schmitt, M.~Hessel, and K.~Simonyan, ``Off-policy actor-critic with shared
  experience replay,'' in \emph{International Conference on Machine Learning},
  2020, pp. 8545--8554.

\bibitem{yang2022policy}
L.~Yang, Y.~Zhang, G.~Zheng, Q.~Zheng, P.~Li, J.~Huang, and G.~Pan, ``Policy
  optimization with stochastic mirror descent,'' in \emph{Proceedings of the
  AAAI Conference on Artificial Intelligence}, 2022, pp. 8823--8831.

\bibitem{luo2023human}
B.~Luo, Z.~Wu, F.~Zhou, and B.-C. Wang, ``Human-in-the-loop reinforcement
  learning in continuous-action space,'' \emph{IEEE Transactions on Neural
  Networks and Learning Systems}, 2023.

\bibitem{meng2021off}
W.~Meng, Q.~Zheng, Y.~Shi, and G.~Pan, ``An off-policy trust region policy
  optimization method with monotonic improvement guarantee for deep
  reinforcement learning,'' \emph{IEEE Transactions on Neural Networks and
  Learning Systems}, vol.~33, no.~5, pp. 2223--2235, 2021.

\bibitem{chen2023hierarchical}
J.~Chen, T.~Lan, and V.~Aggarwal, ``Hierarchical adversarial inverse
  reinforcement learning,'' \emph{IEEE Transactions on Neural Networks and
  Learning Systems}, 2023.

\bibitem{lillicrap2015continuous}
T.~P. Lillicrap, J.~J. Hunt, A.~Pritzel, N.~Heess, T.~Erez, Y.~Tassa,
  D.~Silver, and D.~Wierstra, ``Continuous control with deep reinforcement
  learning,'' in \emph{International Conference on Learning Representations},
  2016.

\bibitem{kallus2020statistically}
N.~Kallus and M.~Uehara, ``Statistically efficient off-policy policy
  gradients,'' in \emph{International Conference on Machine Learning}, 2020,
  pp. 5089--5100.

\bibitem{fujita2018clipped}
Y.~Fujita and S.-i. Maeda, ``Clipped action policy gradient,'' in
  \emph{International Conference on Machine Learning}, 2018, pp. 1597--1606.

\bibitem{gruslys2018the}
A.~Gruslys, W.~Dabney, M.~G. Azar, B.~Piot, M.~Bellemare, and R.~Munos, ``The
  reactor: A fast and sample-efficient actor-critic agent for reinforcement
  learning,'' in \emph{International Conference on Learning Representations},
  2018.

\bibitem{flet2021learning}
Y.~Flet-Berliac, R.~Ouhamma, O.-A. Maillard, and P.~Preux, ``Learning value
  functions in deep policy gradients using residual variance,'' in
  \emph{International Conference on Learning Representations}, 2021.

\bibitem{sutton2000policy}
R.~S. Sutton, D.~A. McAllester, S.~P. Singh, and Y.~Mansour, ``Policy gradient
  methods for reinforcement learning with function approximation,'' in
  \emph{Advances in Neural Information Processing Systems}, 2000, pp.
  1057--1063.

\bibitem{saxena2023off}
N.~Saxena, S.~Khastagir, S.~Kolathaya, and S.~Bhatnagar, ``Off-policy average
  reward actor-critic with deterministic policy search,'' in
  \emph{International Conference on Machine Learning}, 2023, pp.
  30\,130--30\,203.

\bibitem{zhou2023robust}
Z.~Zhou, G.~Liu, and M.~Zhou, ``A robust mean-field actor-critic reinforcement
  learning against adversarial perturbations on agent states,'' \emph{IEEE
  Transactions on Neural Networks and Learning Systems}, 2023.

\bibitem{sutton2018reinforcement}
R.~S. Sutton and A.~G. Barto, \emph{Reinforcement learning: An
  introduction}.\hskip 1em plus 0.5em minus 0.4em\relax MIT press, 2018.

\bibitem{fujimoto2023sale}
S.~Fujimoto, W.-D. Chang, E.~J. Smith, S.~S. Gu, D.~Precup, and D.~Meger, ``For
  sale: State-action representation learning for deep reinforcement learning,''
  \emph{arXiv preprint arXiv:2306.02451}, 2023.

\bibitem{degris2012off}
T.~Degris, M.~White, R.~S. Sutton, and D.~reinforcement learn-ing
  through~policy optimization., ``Off-policy actor-critic,'' in
  \emph{International Conference on Machine Learning}, 2012, pp. 179--186.

\bibitem{wang2016sample}
Z.~Wang, V.~Bapst, N.~Heess, V.~Mnih, R.~Munos, K.~Kavukcuoglu, and
  N.~de~Freitas, ``Sample efficient actor-critic with experience replay,'' in
  \emph{International Conference on Learning Representations}, 2017.

\bibitem{gu2017interpolated}
S.~S. Gu, T.~Lillicrap, R.~E. Turner, Z.~Ghahramani, B.~Sch{\"o}lkopf, and
  S.~Levine, ``Interpolated policy gradient: Merging on-policy and off-policy
  gradient estimation for deep reinforcement learning,'' in \emph{Advances in
  Neural Information Processing Systems}, 2017, pp. 3846--3855.

\bibitem{ciosek2020expected}
K.~Ciosek and S.~Whiteson, ``Expected policy gradients for reinforcement
  learning,'' \emph{The Journal of Machine Learning Research}, vol.~21, no.~52,
  pp. 1--51, 2020.

\bibitem{kuba2021settling}
J.~G. Kuba, M.~Wen, L.~Meng, H.~Zhang, D.~Mguni, J.~Wang, Y.~Yang
  \emph{et~al.}, ``Settling the variance of multi-agent policy gradients,'' in
  \emph{Advances in Neural Information Processing Systems}, 2021, pp.
  13\,458--13\,470.

\bibitem{barakat2023reinforcement}
A.~Barakat, I.~Fatkhullin, and N.~He, ``Reinforcement learning with general
  utilities: Simpler variance reduction and large state-action space,'' in
  \emph{International Conference on Machine Learning}, 2023, pp. 1753--1800.

\bibitem{GuLGTL17}
S.~Gu, T.~P. Lillicrap, Z.~Ghahramani, R.~E. Turner, and S.~Levine, ``Q-prop:
  Sample-efficient policy gradient with an off-policy critic,'' in
  \emph{International Conference on Learning Representations}, 2017.

\bibitem{mnih2016asynchronous}
V.~Mnih, A.~P. Badia, M.~Mirza, A.~Graves, T.~Lillicrap, T.~Harley, D.~Silver,
  and K.~Kavukcuoglu, ``Asynchronous methods for deep reinforcement learning,''
  in \emph{International Conference on Machine Learning}, 2016, pp. 1928--1937.

\bibitem{grathwohl2018backpropagation}
W.~Grathwohl, D.~Choi, Y.~Wu, G.~Roeder, and D.~Duvenaud, ``Backpropagation
  through the void: Optimizing control variates for black-box gradient
  estimation,'' in \emph{International Conference on Learning Representations},
  2018.

\bibitem{yang2021sample}
L.~Yang, Q.~Zheng, and G.~Pan, ``Sample complexity of policy gradient finding
  second-order stationary points,'' in \emph{Proceedings of the AAAI Conference
  on Artificial Intelligence}, 2021, pp. 10\,630--10\,638.

\bibitem{dalal2023softtreemax}
G.~Dalal, A.~Hallak, G.~Thoppe, S.~Mannor, and G.~Chechik, ``Softtreemax:
  Exponential variance reduction in policy gradient via tree search,''
  \emph{arXiv preprint arXiv:2301.13236}, 2023.

\bibitem{lin2023sample}
Z.~Lin, X.~Wu, M.~Sun, D.~Ye, Q.~Fu, W.~Yang, and W.~Liu, ``Sample dropout: A
  simple yet effective variance reduction technique in deep policy
  optimization,'' \emph{arXiv preprint arXiv:2302.02299}, 2023.

\bibitem{espeholt2018impala}
L.~Espeholt, H.~Soyer, R.~Munos, K.~Simonyan, V.~Mnih, T.~Ward, Y.~Doron,
  V.~Firoiu, T.~Harley, I.~Dunning \emph{et~al.}, ``Impala: Scalable
  distributed deep-rl with importance weighted actor-learner architectures,''
  in \emph{International Conference on Machine Learning}, 2018, pp. 1407--1416.

\bibitem{greensmith2004variance}
E.~Greensmith, P.~L. Bartlett, and J.~Baxter, ``Variance reduction techniques
  for gradient estimates in reinforcement learning,'' \emph{The Journal of
  Machine Learning Research}, vol.~5, no.~9, pp. 1471--1530, 2004.

\bibitem{liu2018action}
H.~Liu, Y.~Feng, Y.~Mao, D.~Zhou, J.~Peng, and Q.~Liu, ``Action-dependent
  control variates for policy optimization via stein identity,'' in
  \emph{International Conference on Learning Representations}, 2018.

\bibitem{wu2018variance}
C.~Wu, A.~Rajeswaran, Y.~Duan, V.~Kumar, A.~M. Bayen, S.~M. Kakade,
  I.~Mordatch, and P.~Abbeel, ``Variance reduction for policy gradient with
  action-dependent factorized baselines,'' in \emph{International Conference on
  Learning Representations}, 2018.

\bibitem{ni2022optimal}
C.~Ni, R.~Zhang, X.~Ji, X.~Zhang, and M.~Wang, ``Optimal estimation of policy
  gradient via double fitted iteration,'' in \emph{International Conference on
  Machine Learning}, 2022, pp. 16\,724--16\,783.

\bibitem{zhang2021convergence}
J.~Zhang, C.~Ni, C.~Szepesvari, M.~Wang \emph{et~al.}, ``On the convergence and
  sample efficiency of variance-reduced policy gradient method,'' in
  \emph{Advances in Neural Information Processing Systems}, 2021, pp.
  2228--2240.

\bibitem{brockman2016openai}
G.~Brockman, V.~Cheung, L.~Pettersson, J.~Schneider, J.~Schulman, J.~Tang, and
  W.~Zaremba, ``Openai gym,'' \emph{arXiv preprint arXiv:1606.01540}, 2016.

\bibitem{todorov2012mujoco}
E.~Todorov, T.~Erez, and Y.~Tassa, ``Mujoco: A physics engine for model-based
  control,'' in \emph{2012 IEEE/RSJ International Conference on Intelligent
  Robots and Systems}, 2012, pp. 5026--5033.

\bibitem{fujimoto2018addressing}
S.~Fujimoto, H.~Hoof, and D.~Meger, ``Addressing function approximation error
  in actor-critic methods,'' in \emph{International Conference on Machine
  Learning}, 2018, pp. 1587--1596.

\bibitem{schulman2017proximal}
J.~Schulman, F.~Wolski, P.~Dhariwal, A.~Radford, and O.~Klimov, ``Proximal
  policy optimization algorithms,'' \emph{arXiv preprint arXiv:1707.06347},
  2017.

\bibitem{lee2020stochastic}
A.~X. Lee, A.~Nagabandi, P.~Abbeel, and S.~Levine, ``Stochastic latent
  actor-critic: Deep reinforcement learning with a latent variable model,'' in
  \emph{Advances in Neural Information Processing Systems}, 2020, pp. 741--752.

\end{thebibliography}
	
	%
	%
	%
	%
	%
	%
	%
	%
	%
	
\end{document}